\definecolor{citecolor}{HTML}{0071BC}
\definecolor{linkcolor}{HTML}{ED1C24}
\definecolor{LightCyan}{rgb}{0.8, 0.9, 1}
\definecolor{Gray}{gray}{0.5}
\newcolumntype{a}{>{\columncolor{Gray}}c}
\newcolumntype{b}{>{\columncolor{white}}c}
\title{\huge Beyond Squared Error: Exploring Loss Design for Enhanced Training of Generative Flow Networks}
\author{Rui Hu\thanks{Equal contribution} \thanks{
IIIS, 
Tsinghua University, e-mail: {\tt hu-r24@mails.tsinghua.edu.cn}
} 
	~~
Yifan Zhang\footnotemark[1] \thanks{
IIIS, 
Tsinghua University, e-mail: {\tt zhangyif21@mails.tsinghua.edu.cn}
} 
	~~
Zhuoran Li \thanks{
IIIS, 
Tsinghua University, e-mail: {\tt lizr20@mails.tsinghua.edu.cn}
} 
	~~
Longbo Huang\thanks{
Corresponding author, IIIS, Tsinghua University, 
e-mail: {\tt longbohuang@tsinghua.edu.cn}}
}
\date{}
\begin{document}
\maketitle

%%%%%%%%%%%%%%%%%%%%%%%%%%%%%%
%%% Abstract
\begin{abstract}

Generative Flow Networks (GFlowNets) are a novel class of generative models designed to sample from unnormalized distributions and have found applications in various important tasks, attracting great research interest in their training algorithms. 
In general, GFlowNets are trained by fitting the forward flow to the backward flow on sampled training objects. 
Prior work focused on the choice of training objects, parameterizations, sampling and resampling strategies, and backward policies, 
aiming to enhance credit assignment, exploration, or exploitation of the training process. 
However, the choice of regression loss, which can highly influence the exploration and exploitation behavior of the under-training policy, has been overlooked. 
Due to the lack of theoretical understanding for choosing an appropriate regression loss, most existing algorithms train the flow network by minimizing the squared error of the forward and backward flows in log-space, i.e., using the quadratic regression loss.
In this work, we rigorously prove that distinct regression losses correspond to specific divergence measures, enabling us to design and analyze regression losses according to the desired properties of the corresponding divergence measures.
Specifically, we examine two key properties: zero-forcing and zero-avoiding, where the former promotes exploitation and higher rewards, and the latter encourages exploration and enhances diversity.
Based on our theoretical framework, we propose three novel regression losses, namely, Shifted-Cosh, Linex(1/2), and Linex(1). We evaluate them across three benchmarks: hyper-grid, bit-sequence generation, and molecule generation.
Our proposed losses are compatible with most existing training algorithms, and significantly improve the performances of the algorithms concerning convergence speed, sample diversity, and robustness.
\end{abstract}

%%%%%%%%%%%%%%%%%%%%%%%%%%%%%%
%%% Introduction
\section{Introduction}

Generative Flow Networks (GFlowNets), introduced by \citet{bengio2021flow, bengio2023gflownet}, represent a novel class of generative models. They have been successfully employed in a wide range of important applications including molecule discovery~\citep{bengio2021flow}, biological sequence design~\citep{jain2022biological}, combinatorial optimization~\citep{zhang2023let}, and text generation~\citep{hu2023amortizing}, attracting increasing interests for their ability to generate a diverse set of high-quality samples.

GFlowNets are learning-based methods for sampling from an unnormalized distribution. Compared to the learning-free Monte-Carlo Markov Chain (MCMC) methods, GFlowNets provide an alternative to exchange the complexity of iterative sampling through long chains for the complexity of training a sampler~\citep{bengio2023gflownet}.
GFlowNets achieves this by decomposing the generating process into multiple steps and modeling all possible trajectories as a directed acyclic graph (DAG). The training goal is to determine a forward policy on this DAG, ensuring that the resulting probability distribution over terminal states aligns with the unnormalized target distribution. However, achieving this alignment is challenging due to the necessity of marginalizing the forward policy across a vast trajectory space. To address this, GFlowNets utilize a backward flow to distribute the unnormalized target distribution over trajectories, thereby aligning the forward and backward flows.

\begin{figure}[t]
\centering
% \begin{tikzpicture}
%     % 引入图片
%     \node[anchor=south west,inner sep=0] (image) at (0,0) {\includegraphics[width=0.9\textwidth, trim=0 8.65cm 0 1.75cm, clip]{figures/main_092804.pdf}};
%     \begin{scope}[x={(image.south east)},y={(image.north west)}]
%         % 建立相对坐标系
%         % \draw[help lines,xstep=.1,ystep=.1] (0,0) grid (1,1);
%         % \foreach \x in {0,1,...,9} { \node [anchor=north] at (\x/10,0) {0.\x}; }
%         % \foreach \y in {0,1,...,9} { \node [anchor=east] at (0,\y/10) {0.\y}; }
%         % 作图命令
%         % \draw[red, ultra thick, rounded corners] (0.62,0.65) rectangle (0.78,0.75)
%         \node[rotate=90] at (0.645, 0.458) {\scriptsize{\textbf{Thm. \ref{thm:main-theorem}}}};
%         \node[rotate=270] at (0.982, 0.458) {\scriptsize{\textbf{Thm. \ref{zfza}}}};
%     \end{scope}
% \end{tikzpicture}
% \includegraphics[width=0.34\textwidth, trim=1.1cm 7.6cm 12.6cm 2.35cm, clip]{figures/unified_1002.pdf}
\vspace{-10pt}
\includegraphics[width=0.26\textwidth, trim=0 4.4cm 16.65cm 2.35cm, clip]{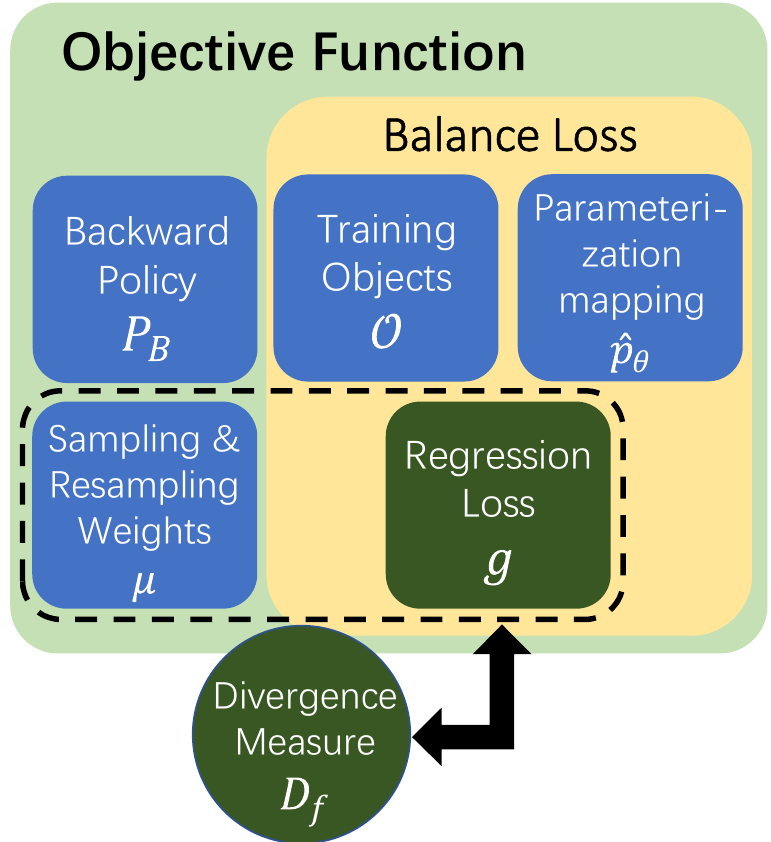}
\includegraphics[width=0.69\textwidth, trim=0 7.9cm 3.6cm 2.25cm, clip]{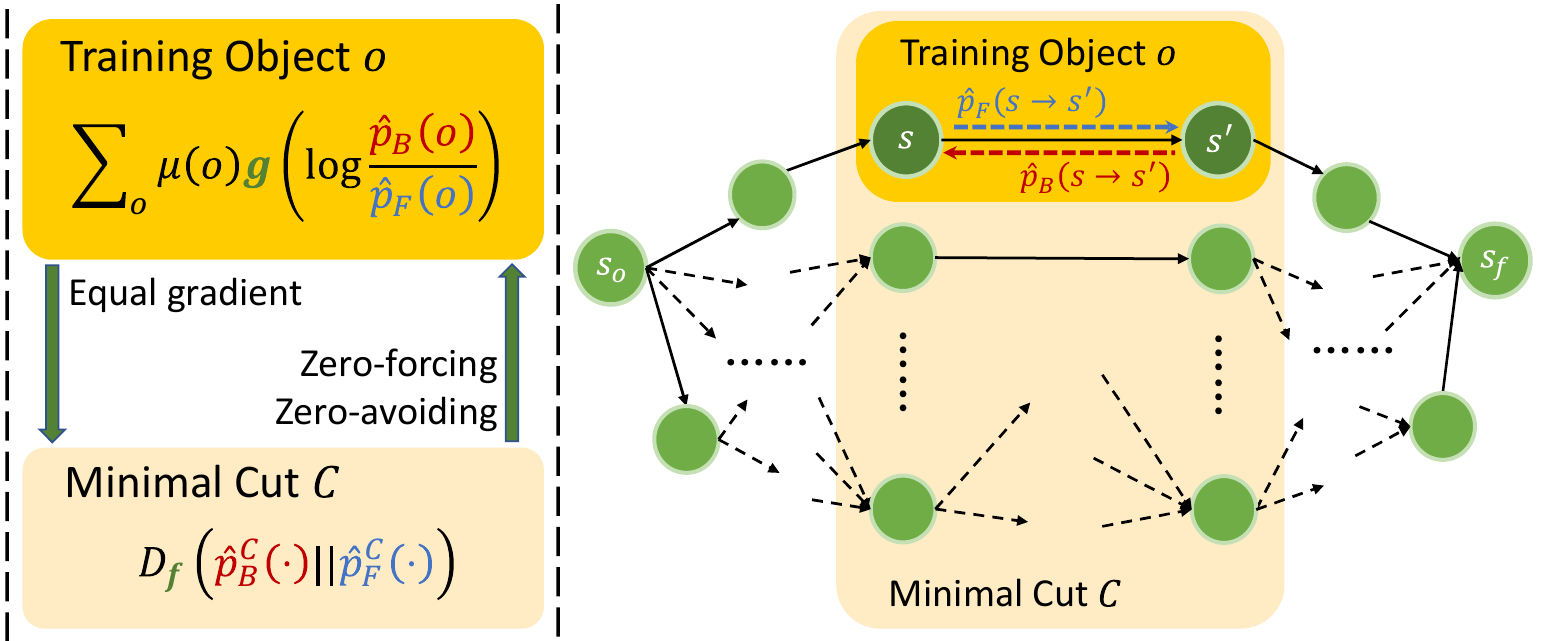}
% \vspace{3pt}
\caption{An illustration of our main theoretical results: the unified framework for GFlowNet training algorithms and the correspondence between regression losses over forward and backward flows on training objects and $f$-divergences between the two flows over minimal cuts.}
% \longbo{enlarge the figure}}
\label{fig:overall}
\vspace{-12pt}
\end{figure}

Building on this foundation, various training algorithms for GFlowNets have been proposed, aiming to enhance the training of GFlowNets from different aspects such as credit assignment, exploration, and exploitation. Depending on the main focus of the methods, these algorithms can be divided into four categories, including 
training objects~\citep{malkin2022trajectory, madan2023learning}, parameterization methods~\citep{pan2023better, deleu2022bayesian}, sampling and resampling strategies~\citep{rector2023thompson, kim2023local, lau2024qgfn} and the selection of backward policies~\citep{shen2023towards, jang2024pessimistic}. 

Most existing algorithms train the flow network by minimizing the squared error of the forward and backward flows in log-space, i.e., using the quadratic regression loss. 
However, there may exist more potential choices for loss functions beyond square error. Intuitively, any convex function that is minimized at zero point also provides a valid objective, in the sense that the forward and backward policies are aligned if and only if the loss is minimized. Further, the gradients of different regression losses lead to different optimization trajectories of the forward policy, thus highly influencing the exploration and exploitation behaviors.
Yet, due to the lack of theoretical understanding for choosing an appropriate regression loss, it is unclear whether the above intuition is practical. In particular, the following central question remains open: 
\begin{center}
\textbf{\emph{Can a theoretical foundation be established for designing and analyzing regression loss functions?}}
\end{center}

To answer this question, we conduct a systematic investigation of the largely overlooked regression loss aspect in GFlowNet training. Specifically, building on the work of \cite{malkin2022gflownets}, which established that training GFlowNets is analogous to optimizing a KL divergence, we rigorously prove that the gradient of the objective function using different regression losses, when combined with appropriate proposal distributions and resampling weights, equal to that of distinct divergence measures between the target distribution and the flow network-induced distribution.  
As different divergence measures endow the training objectives with different properties, and hence show different characteristics in the training process, our results provide a unified framework
\begin{wrapfigure}[15]{r}{0.4\linewidth}
    \centering
    \vspace{-12pt}
    \includegraphics[width=0.8\linewidth, trim=12.75cm 5.15cm 3.75cm 2cm, clip]{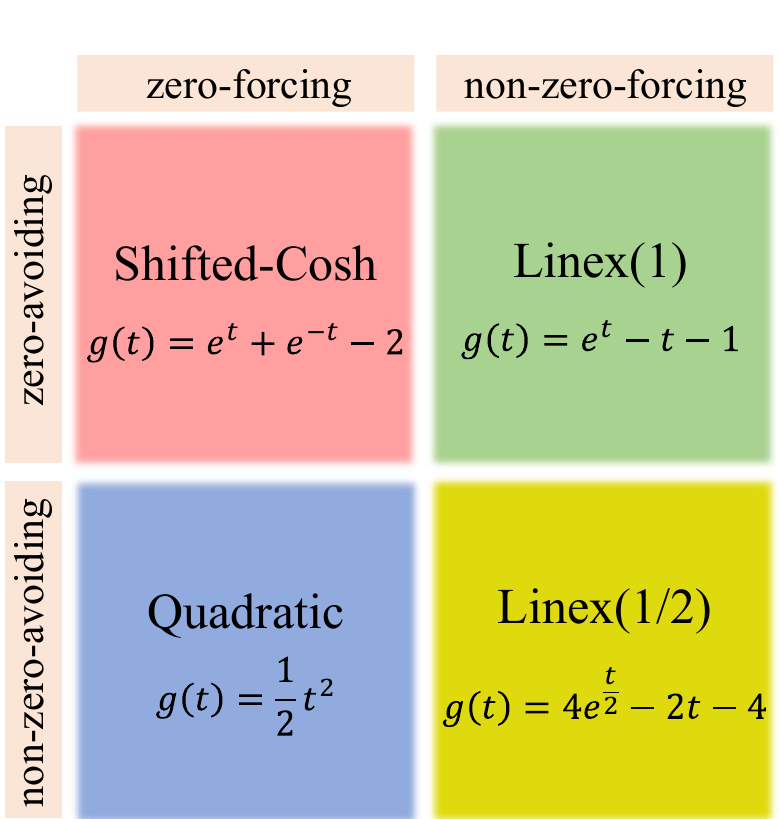}
    \vspace{3pt}
    \caption{Our proposed regression losses and their properties.}
    \label{main-figure}
\end{wrapfigure}
to generalize existing training methods and provide a principled way of designing efficient regression losses for GFlowNets training. Fig.~\ref{fig:overall} provides an overview of our technical results. 

In particular, we study two important properties of the training objectives, i.e., \textbf{zero-forcing} and \textbf{zero-avoiding}, and systematically investigate their effects. In general, zero-forcing losses encourage exploitation, while zero-avoiding losses encourage exploration. 
Equipped with our new framework,  we design three novel regression losses, namely \textbf{Linex(1)}, \textbf{Linex(1/2)}, and \textbf{Shifted-Cosh}, filling the four quadrants made up of the zero-forcing and zero-avoiding properties. 
We evaluate the new losses on three popular benchmarks: hyper-grid, bit-sequence generation, and molecule generation. Our results show that the newly proposed losses exhibit significant advantages over existing losses in terms of diversity, quality, and robustness, demonstrating the effectiveness of our design framework. 

Our contributions can be summarized as follows:

\begin{itemize}[leftmargin=15pt,parsep=1pt, itemsep=1pt, topsep=1pt] 

    \item We develop a novel framework of the objective functions for training GFlowNets. This new framework identifies five key components of the objective function and unifies existing GFlowNet training algorithms including Flow-Matching GFlowNets~\citep{bengio2021flow}, Detailed-Balance GFlowNets~\citep{bengio2023gflownet}, Trajectory-Balance GFlowNets~\citep{malkin2022trajectory}, Sub-Trajectory-Balance GFlowNets~\citep{madan2023learning} and their variants like Forward-Looking GFlowNets~\citep{pan2022generative} and DAG GFlowNets~\citep{deleu2022bayesian}
    , etc (see Section \ref{unif}).

    \item We establish the correspondence between the various objective functions for GFlowNets and different divergence measures. This insight facilitates a deeper understanding of how to design and analyze effective training objectives for GFlowNets (See Section \ref{sec:correspondence}).
    
    \item Based on our framework, we conduct an in-depth investigation on two key properties of regression losses: zero-forcing and zero-avoiding. We then design three new loss functions possessing different exploration/exploitation features, namely, Linex(1), Linex(1/2), and Shifted-Cosh (see Section \ref{sec:loss_design}). 
    
    \item We conduct extensive experiments on three popular benchmarks: hyper-grid~\citep{bengio2021flow}, bit-sequence generation~\citep{malkin2022trajectory}, and molecule generation~\citep{bengio2021flow}. Our results demonstrate that the new losses significantly outperform the common squared loss in metrics including convergence speed, diversity, quality, and robustness (see Section \ref{sec:experiments}). 
    
\end{itemize}

\section{Related Work}

\textbf{Generative Flow Networks (GFlowNets).} GFlowNets were initially proposed by~\citet{bengio2021flow} for scientific discovery~\citep{jain2023gflownets} as a framework for generative models capable of learning to sample from unnormalized distributions. The foundational theoretical framework was further developed by~\citet{bengio2023gflownet}. Since then, numerous studies have focused on enhancing GFlowNet training from various perspectives, such as introducing novel balance conditions and loss functions~\citep{malkin2022trajectory, madan2023learning}, refining sampling and resampling strategies~\citep{shen2023towards, rector2023thompson, kim2023local, lau2024qgfn}, improving credit assignment~\citep{pan2023better, jang2023learning} and exploring different options for backward policies~\citep{shen2023towards, mohammadpour2024maximum, jang2024pessimistic}. Notably, our proposed method is compatible with all the aforementioned works as we have identified a novel key component of the objective functions for training GFlowNets.

From a theoretical perspective, GFlowNets are closely related to variational inference (VI, \citealt{malkin2022gflownets, zimmermann2023a}) and entropy-regularized reinforcement learning (RL) on deterministic MDPs~\citep{tiapkin2024generative, mohammadpour2024maximum}. All of them can be viewed as solving distribution matching problems, and the gradients of their training objectives are equivalent to that of the reverse KL divergence. Our main theorem (Theorem \ref{thm:main-theorem} and Theorem \ref{thm:main-theorem-appendix}) generalizes the theoretical results of \cite{malkin2022gflownets} by (i) extending reverse KL to the whole family of $f$-divergence and (ii) including more parameterization methods.

People also try to extend the formulation of GFlowNets to more complex scenarios, allowing continuous space~\citep{lahlou2023theory}, intermediate rewards~\citep{pan2022generative}, stochastic rewards~\citep{zhang2023distributional}, implicit reward given by priority~\citep{chen2023order}, conditioned rewards~\citep{kim2023learning}, stochastic transitions~\citep{pan2023stochastic}, non-acyclic transitions~\citep{brunswic2024theory}, etc. Equipped with these techniques, GFlowNets are applied to an increasingly wide range of fields including molecular discovery~\citep{jain2023multi, zhu2024sample, pandey2024gflownet}, biological sequence design~\citep{jain2022biological, ghari2023generative}, causal inference~\citep{zhang2022generative, atanackovic2023dyngfn, deleu2024joint}, combinatorial optimization~\citep{zhang2023let, kim2024ant}, diffusion models~\citep{zhang2023diffusion, venkatraman2024amortizing} and large language models~\citep{hu2023amortizing, song2024latent}. 

\vspace{-12pt}
\noindent\paragraph{Divergence measures in training generative models.} The properties of divergence measures and their effects as training objectives have been studied by \citet{minka2005divergence}. As the original training objectives of many generative models are equivalent to one of the divergence measures (typically the reverse KL divergence), it is natural to introduce a more general class of divergence measures to replace it. This idea has successfully improved the performances of a variety of algorithms for training generative models, including GAN \citep{nowozin2016f, arjovsky2017wasserstein}, VAE \citep{zhang2019variational}, VI \citep{li2016renyi, dieng2017variational}, Distributional Policy Gradient (DPG for RL, \citealt{go2023aligning}), and Direct Preference Optimization (DPO for RLHF, \citealt{wang2023beyond}). A very recent study \citep{silva2024on} also explores this idea in the context of GFlowNets by investigating the use of four different divergence measures: forward KL, reverse KL, Renyi-$\alpha$ and Tsallis-$\alpha$.

% Different from the aforementioned studies, in this work, we establish a theoretical framework for the regression loss component of GFlowNets and prove that different regression losses correspond to specific divergence measures. By analyzing the zero-forcing and zero-avoiding properties of these divergence measures, we can opt for the desired regression loss for enhancing exploitation and/or exploitation in GflowNets training algorithms.

In contrast to the studies mentioned above, this work establishes a two-way connection between $f$-divergences and the regression loss function $g$ in the training objectives of GFlowNets. By examining the zero-forcing and zero-avoiding properties of these divergence measures, we can opt for the desired regression loss to enhance exploration and/or exploitation for training GFlowNets.

\section{Preliminaries of GFlowNets and $f$-Divergence} \label{prel}

In this section, we first present preliminaries of GFlowNets and the  $f$-divergence, which will be the foundation of our subsequent exposition. 

\subsection{GFlowNets}

A GFlowNet is defined on a directed acyclic graph $G=(V, E)$ with a source node $s_o$ and a sink node $s_f$, such that every other vertex is reachable starting from $s_o$, and $s_f$ is reachable starting from any other vertex. Let $\mathcal{T}$ be the collection of all complete trajectories, and $\Sigma$ be the corresponding $\sigma$-algebra, then a \textbf{flow} is a measure $F$ on $(\mathcal{T},\Sigma)$.

Further, we define \textbf{state-flow}, \textbf{edge-flow} and \textbf{total flow} by
\begin{align*}
    F(s):=F(\{\tau:s\in\tau\}), \quad
    F(s\to s'):=F(\{\tau:(s\to s')\in\tau\}), \quad
    Z:=F(\{\mathcal{T}\}).
\end{align*}

A flow then induces a forward probability $P_F(s'|s)$ and a backward probability $P_B(s|s')$, defined as: 
\begin{align*}
    P_F(s'|s):=P(s\to s'|s)=\frac{F(s\to s')}{F(s)}, \quad P_B(s|s'):=P(s\to s'|s')=\frac{F(s\to s')}{F(s')}.
\end{align*}

\noindent\textbf{Markovian flow} is a special family of flows such that at each step, the future behavior of a particle in the flow stream only depends on its current state. Formally speaking, let $\iota$ be any trajectories from $s_o$ to $s$, then $P(s\to s'|\iota)=P(s\to s'|s)=P_F(s'|s)$. We focus on Markovian flows in the following.

A set of (not necessarily complete) trajectories $C$ is a \textbf{cut} if and only if for any complete trajectory $\tau$, there exists $\iota\in C$ such that $\iota$ is a part of $\tau$. Here we view vertices and edges as trajectories of length $1$ or $2$ and further extend the definition of $F$ to all trajectories as
\begin{align*}
    F(\iota)=F(\{\tau:\iota\text{ is a part of }\tau\}).
\end{align*}
A \textbf{minimal cut} is a cut such that the sum of flows in the cut is minimized.  According to the max-flow min-cut theorem, this amount is equal to $Z$ the total flow. Let $\mathcal{C}$ be the collection of all minimal cuts, then for each minimal cut $C\in\mathcal{C}$, let $p^C(\iota):=F(\iota)$ for all $\iota\in C$, then $p^C(\cdot)$ can be viewed as an unnormalized distribution over $C$. 
% \begin{align*}
%     p^C(\iota)=\frac{F(\iota)}{Z}
% \end{align*}

Let the terminating set $S^f$ be the collection of nodes that directly link to $s_f$. Note that $C=\{(s\to s'):s'=s_f\}$ is a minimal cut, so $p^{C}(\cdot)$ induces a distribution over $S_f$. We denote it as $p_F^T$ and its induced probability distribution as $P_T$ (called the \textbf{terminating probability}):
\begin{align*}
    \forall s\in S^f,\ &p_F^T(s)=F(s\to s_f), \quad 
    P_T(s)=\frac{p_F^T(s)}{\sum_{s'\in S^f}p_F^T(s')}=\frac{F(s\to s_f)}{Z}.
\end{align*}
The ultimate goal of training a GFlowNet is to match $p_F^T$ with $R$, so that the forward policy draws samples from $P_T=P_R$, where $P_R$ denotes the normalized probability distribution defined by $R$.
%\zhuoran{seems known knowledge, put them in chap 3.3?}

% \subsection{From KL-divergence to f-divergence}
\subsection{$f$-Divergence}
%\longbo{the part of the KL and alpha-divergence does not seem needed here? We can directly go into the f divergence}

The $f$-divergence is a general class of divergence measures~\citep{liese2006divergences, polyanskiy2019f}:%\longbo{any reference?}
\begin{align*}
    D_f(p||q)=\sum_{x\in\mathcal{X}}q(x)f\left(\frac{p(x)}{q(x)}\right)+f'(\infty)p\left(\{x\in\mathcal{X}:q(x)=0\}\right),
\end{align*}
% \longbo{actually explain $\mathcal{X}$ here}
where $p$ and $q$ are two probability distributions on a measurable space $(\mathcal{X}, \mathcal{F})$, $f:\mathbb{R}_{++}\to\mathbb{R}$ is a twice differentiable convex function with $f(1)=f'(1)=0$ , and $f'(\infty)=\lim_{t\to+\infty}\frac{f(t)}{t}$. Hence, the Kullback-Leibler (KL) divergence \citep{zhu1995information} $D_{\text{KL}}(p||q)$ and $D_{\text{KL}}(q||p)$ correspond to $D_f(p||q)$ with $f(t)=t\log t-t+1$ and $f(t)=t-\log t-1$, respectively. 
When $f(t)=-\frac{t^\alpha}{\alpha(1-\alpha)}+\frac{t}{1-\alpha}+\frac{1}{\alpha}$, the $f$-divergence corresponds to the $\alpha$-divergence $D_{\alpha}(p||q)$ introduced in \citep{zhu1995information, amari2012differential}. 

The $f$-divergence preserves the following nice properties of KL divergence, ensuring that they can also serve as good optimization objectives. 
\begin{fact}[\citet{liese2006divergences}]
    $D_f(p||q)=0$ if and only if $p=q$.
\end{fact}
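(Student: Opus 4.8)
The plan is to prove both directions, with the reverse implication carrying all the content. The backward direction ($p=q \Rightarrow D_f(p\|q)=0$) is immediate: when $p=q$ every ratio $p(x)/q(x)$ equals $1$, so each summand is $q(x)f(1)=0$, and the set $\{x:q(x)=0\}$ has $p$-measure zero, killing the defect term $f'(\infty)p(\{x:q(x)=0\})$. The forward direction ($D_f(p\|q)=0 \Rightarrow p=q$) is where I would concentrate, and the governing idea is that the hypotheses on $f$ make $D_f$ a sum of non-negative quantities whose vanishing pins down the ratio $p/q$.

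First I would establish non-negativity of every term. Since $f$ is convex and differentiable, the first-order condition gives, for all $t>0$,
\[
f(t)\ \ge\ f(1)+f'(1)(t-1)\ =\ 0,
\]
using $f(1)=f'(1)=0$. Hence each summand $q(x)f(p(x)/q(x))\ge 0$. For the defect term, $f\ge 0$ forces $f'(\infty)=\lim_{t\to\infty}f(t)/t\ge 0$, and $p(\{x:q(x)=0\})\ge 0$, so that term is non-negative too. Therefore $D_f(p\|q)$ is a sum of non-negative contributions, and $D_f(p\|q)=0$ forces each of them to vanish simultaneously.

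Then I would extract $p=q$ from the vanishing of the summands. For every $x$ with $q(x)>0$ we get $f(p(x)/q(x))=0$. Here the crucial point is that $t=1$ is the \emph{unique} zero of $f$ on $\mathbb{R}_{++}$: combined with $f\ge 0$ and $f(1)=0$, strict convexity at $1$ makes $1$ the unique minimizer, so $f(t)=0\Rightarrow t=1$. This yields $p(x)=q(x)$ on the support of $q$. To finish, I would use that $q$ is a probability distribution: summing over its support,
\[
\sum_{x:\,q(x)>0}p(x)\ =\ \sum_{x:\,q(x)>0}q(x)\ =\ 1,
\]
so $p$ places no mass on $\{x:q(x)=0\}$; combined with the matching on the support this gives $p=q$ pointwise. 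Note this route makes the defect term's vanishing automatic rather than a separate input.

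The main obstacle, really a hypothesis-sharpening issue, is the step ``$f(t)=0\Rightarrow t=1$.'' Under mere convexity this can fail (for instance $f\equiv 0$ gives $D_f\equiv 0$ with $p\neq q$ admissible), so the statement implicitly requires $f$ to be \emph{strictly} convex at $t=1$. This is consistent with every example in the paper, since for the KL, reverse KL, and $\alpha$-divergence one checks $f''>0$ throughout $\mathbb{R}_{++}$, and I would state this strict-convexity-at-$1$ condition explicitly before invoking uniqueness of the minimizer.
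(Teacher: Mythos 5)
Your proof is correct, and there is in fact nothing in the paper to compare it against: the paper imports this Fact from \citet{liese2006divergences} without proof. Your outline is the standard one --- the first-order convexity bound $f(t)\ge f(1)+f'(1)(t-1)=0$ gives termwise non-negativity, termwise vanishing then pins down the ratios, and mass accounting over the support of $q$ finishes. The mass-accounting step is a genuinely good move: it makes the vanishing of the defect term $f'(\infty)\,p(\{x:q(x)=0\})$ a consequence rather than a required input, which matters because nothing in the hypotheses rules out $f'(\infty)=0$. Your closing observation is also a real correction, not a quibble: under the paper's stated hypotheses (twice differentiable, convex, $f(1)=f'(1)=0$) the Fact is false as written, since $f\equiv 0$ satisfies all of them yet gives $D_f\equiv 0$ for every pair $(p,q)$; strict convexity at $t=1$ is exactly the additional hypothesis under which \citet{liese2006divergences} prove the ``only if'' direction, and it holds for every $f$ actually used in the paper.

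One refinement: be careful which notion of ``strictly convex at $1$'' you invoke, because your key claim ``$f(t)=0\Rightarrow t=1$'' is sensitive to it. If it means $f''(1)>0$ (natural here, given twice differentiability), the claim is correct: if $f(t_0)=0$ for some $t_0\neq 1$, then convexity and $f\ge 0$ force $f\equiv 0$ on the interval between $1$ and $t_0$, contradicting $f''(1)>0$. But under the weaker chord definition (no affine segment of the graph straddling $t=1$), the claim can fail: $f(t)=\max(0,1-t)^3$ is $C^2$, convex, strictly convex at $1$ in the chord sense, yet vanishes on all of $[1,\infty)$. Even in that case your own argument survives with a one-line patch: termwise vanishing then only yields $p(x)\ge q(x)$ (or, in the mirrored case, $p(x)\le q(x)$) on the support of $q$, and your mass-accounting step
\begin{align*}
\sum_{x:\,q(x)>0}p(x)\ \le\ 1\ =\ \sum_{x:\,q(x)>0}q(x)
\end{align*}
already forces termwise equality from a one-sided comparison, after which $p$ has no mass left for $\{x:q(x)=0\}$. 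So your proof is complete as written under the $f''(1)>0$ reading, and needs only this small adjustment to work under the sharp hypothesis.
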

\begin{fact}[\citet{liese2006divergences}]
    $D_f(p||q)$ is convex with respect to either $p$ or $q$.
\end{fact}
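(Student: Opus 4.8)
The plan is to reduce both claims to the \emph{joint} convexity in $(p,q)$ of the perspective map associated with $f$, since joint convexity immediately yields convexity in $p$ for fixed $q$ and in $q$ for fixed $p$ (restricting a jointly convex function to a line along which one coordinate is frozen preserves convexity). Concretely, I would introduce the perspective function $\phi:\mathbb{R}_{++}\times\mathbb{R}_{++}\to\mathbb{R}$ defined by $\phi(u,v)=v\,f(u/v)$, and extend it to the boundary by $\phi(u,0)=u\,f'(\infty)$ for $u>0$, $\phi(0,v)=v\,f(0^+)$, and $\phi(0,0)=0$. With this extension the divergence rewrites as a single sum $D_f(p\|q)=\sum_{x\in\mathcal{X}}\phi(p(x),q(x))$, the extra term $f'(\infty)\,p(\{x:q(x)=0\})$ being exactly the contribution of the coordinates where $q(x)=0$. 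Since a finite sum of jointly convex functions is jointly convex, it suffices to show that $\phi$ is jointly convex.

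For the key step I would prove joint convexity of $\phi$ directly from the convexity of $f$, avoiding second derivatives. Fix $(u_1,v_1),(u_2,v_2)$ with $v_1,v_2>0$ and $\lambda\in[0,1]$, and set $v=\lambda v_1+(1-\lambda)v_2$, $u=\lambda u_1+(1-\lambda)u_2$. Writing $\mu=\lambda v_1/v$ so that $1-\mu=(1-\lambda)v_2/v$ are convex weights, one checks $u/v=\mu\,(u_1/v_1)+(1-\mu)(u_2/v_2)$; applying convexity of $f$ to this convex combination and multiplying through by $v$ gives
\begin{align*}
\phi(u,v)=v\,f\!\left(\tfrac{u}{v}\right)
\le v\big[\mu\,f(\tfrac{u_1}{v_1})+(1-\mu)\,f(\tfrac{u_2}{v_2})\big]
=\lambda\,\phi(u_1,v_1)+(1-\lambda)\,\phi(u_2,v_2).
\end{align*}
Equivalently, one may verify this through the Hessian: a short computation gives $\nabla^2\phi(u,v)=\frac{f''(u/v)}{v^3}\,(v,-u)^{\top}(v,-u)$, a nonnegative scalar (as $f''\ge 0$ by convexity and $v>0$) times a rank-one positive semidefinite matrix, hence positive semidefinite on the interior.

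Finally I would pass joint convexity from the open domain to the closed domain where some $q(x)$ may vanish, which is the one point genuinely requiring care. The extended $\phi$ must be shown convex and lower semicontinuous across the boundary; this follows because the boundary values are precisely the recession data of the convex function $f$, namely $f(0^+)=\lim_{t\to 0^+}f(t)$ and $f'(\infty)=\lim_{t\to\infty}f(t)/t$, so the convexity inequality above survives the limit along admissible directions. Note this boundary term is exactly what makes the $q$-argument nontrivial, since varying $q$ changes the set $\{x:q(x)=0\}$; the unified $\phi$ formulation absorbs this automatically. With $\phi$ jointly convex on the closed domain, $D_f(p\|q)=\sum_x\phi(p(x),q(x))$ is jointly convex, and in particular convex in $p$ and convex in $q$ separately, as claimed. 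I expect the interior convexity to be routine and the boundary analysis involving $f'(\infty)$ to be the main obstacle.
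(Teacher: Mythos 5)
Your proposal is correct. Note that the paper itself gives no proof of this statement --- it is quoted as a known fact from \citet{liese2006divergences} --- so there is no internal argument to compare against; what you have written is essentially the classical proof from the convex-analysis literature. Your route is sound at every step: the rewriting $D_f(p\|q)=\sum_x \phi(p(x),q(x))$ with the extended perspective $\phi$ correctly absorbs the paper's boundary term $f'(\infty)\,p(\{x:q(x)=0\})$; the weight computation $u/v=\mu(u_1/v_1)+(1-\mu)(u_2/v_2)$ with $\mu=\lambda v_1/v$ is exactly the standard proof of joint convexity of the perspective, and your Hessian formula $\nabla^2\phi=\frac{f''(u/v)}{v^3}(v,-u)^\top(v,-u)$ checks out. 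Moreover, you prove more than the Fact asks: joint convexity in $(p,q)$, of which separate convexity in $p$ and in $q$ is an immediate restriction. The only place where you gesture rather than argue is the passage of the convexity inequality to the boundary $v=0$; to make it airtight, invoke the standard fact (e.g., Rockafellar) that the closure of the convex function $(u,v)\mapsto v f(u/v)$ (set to $+\infty$ for $v\le 0$) takes exactly the recession values $u f'(\infty)$ on the ray $\{(u,0):u>0\}$ and $0$ at the origin, which is precisely your extension --- so the extended $\phi$ is the lower semicontinuous hull of a convex function and hence convex. With that one citation or short argument supplied, the proof is complete.
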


The definition of $D_f(p||q)$ can be further extended to all twice differentiable functions $f$ with $f(1)=f'(1)=0$, termed pseudo $f$-divergence.

\section{Training Generative Flow Networks}

In this section, we present our perspective on analyzing GFlowNet training algorithms in detail. In Section \ref{unif}, we provide a general framework with five customizable components to unify existing training algorithms. 
In Section \ref{sec:correspondence}, we dive deep into the regression loss component, which has been overlooked in existing research, and establish a rigorous connection between it and divergence measures. In Section \ref{sec:loss_design}, we further show how to utilize this connection for designing and analyzing objective functions.

\subsection{A Unified Framework for GFlowNet Training Algorithms}

\label{unif}

 Consider the following general objective function for forward policy: 
\begin{align}
    \mathcal{L}_{\mathcal{O},\hat{p}_\theta,\mu,P_B,g}=&\sum_{o\in\mathcal{O}}\mu(o)g\left(\log\frac{\hat{p}_{B}(o;\theta)}{\hat{p}_{F}(o;\theta)}\right)\label{unified}
\end{align}

This formulation is defined by five key components.
(i) The set of training objects $\mathcal{O}$, which can include states, transitions, partial trajectories, or complete trajectories. 
(ii) The parameterization mapping $\hat{p}_\theta$, which defines how the parameters of the flow network represent the forward flow $\hat{p}_F$ and the backward flow $\hat{p}_B$. 
(iii) The sampling and resampling weights $\mu$, which influence how training objects are sampled and weighted.
(iv) The choice of backward policy $P_B$, which can be either fixed or learned.
(v) The regression loss function $g$, ensuring that the forward and backward policies align when minimized.

\begin{table}[htb]
    \centering
    \caption{Summary of existing GFlowNet training algorithms and techniques.}
    \resizebox{\linewidth}{!}{
    \begin{tabular}{|c|m{0.65\textwidth}|} \hline 
        \textbf{Design Component}& \multicolumn{1}{|c|}{\textbf{Algorithms}}\\ \hline
        \makecell[c]{Training Objects $\mathcal{O}$ and \\Parameterization Mapping $\hat{p}_\theta$}& FM-GFN~\citep{bengio2021flow}, DB-GFN~\citep{bengio2023gflownet}, TB-GFN~\citep{malkin2022trajectory}, STB-GFN~\citep{madan2023learning}, FL-GFN~\citep{pan2023better}, DAG-GFN~\citep{deleu2022bayesian, hu2023amortizing}\\ \hline 
        \makecell[c]{Sampling/Resampling Weights $\mu$}& PRT~\citep{shen2023towards}, TS-GFN~\citep{rector2023thompson}, LS-GFN~\citep{kim2023local}, QGFN~\citep{lau2024qgfn}, Genetic-GFN~\citep{kim2024genetic}\\ \hline 
        \makecell[c]{Backward Policy $P_B$}& GTB~\citep{shen2023towards}, ME-GFN~\citep{mohammadpour2024maximum}, PBP-GFN~\citep{jang2024pessimistic}\\ \hline 
        \makecell[c]{Regression Loss $g$}& \textbf{Ours} \\ \hline
    \end{tabular}
    }
    \label{algo_sum}
    \vspace{-6pt}
\end{table}

While most GFlowNets training objectives are not explicitly written in this form, Equation (\ref{unified}) unifies all existing training objectives. Table \ref{algo_sum} presents a categorization of existing algorithms according to the components they specify.

In previous literature, $g(t)=\frac{1}{2}t^2$ has been the only choice for regression loss, and the term $g\left(\log\frac{\hat{p}_{B}(o;\theta)}{\hat{p}_{F}(o;\theta)}\right)=\frac{1}{2}\left(\log\frac{\hat{p}_{B}(o;\theta)}{\hat{p}_{F}(o;\theta)}\right)^2$ is usually referred to as the balance loss. It is specified by the training objects and parameterization mapping. Popular balance losses are flow-matching (FM) loss, detailed-balance (DB) loss, trajectory-balance (TB) loss sub-trajectory-balance (STB) loss, and their modified versions. 
For example, the objective function of on-policy TB loss with fixed uniform $P_B$ can be written as
\begin{align*}
&\mathcal{L}=\sum_{\tau\in\mathcal{T}}\hat{P}_F(\tau;\theta)\frac{1}{2}\left(\log\frac{\hat{p}_{B}(\tau;\theta)}{\hat{p}_{F}(\tau;\theta)}\right)^2,
\end{align*}
where for $\tau=(s_0=s_o, s_1, s_2, \cdots, s_{T-1}, s_T=s_f)$, we have 
\begin{align*}
% \hat{P}_{F}(\tau;\theta)=&\prod_{t=1}^T\hat{P}_F(s_t|s_{t-1};\theta),\\
    \hat{p}_{F}(\tau;\theta)=&\hat{Z}(\theta)\hat{P}_{F}(\tau;\theta)=\hat{Z}(\theta)\prod_{t=1}^T\hat{P}_F(s_t|s_{t-1};\theta),\\
    \hat{p}_{B}(\tau;\theta)=&R(s_{T-1})P_B(\tau)=R(s_{T-1})\prod_{t=1}^{T-1}P_B(s_{t-1}|s_t)=R(s_{T-1})\prod_{t=1}^{T-1}\frac{1}{\text{indegree}(s_t)}. 
\end{align*}
% \begin{align*}

% \end{align*}
Please refer to Appendix~\ref{appendix:algo} for the detailed correspondence of other algorithms under this unified framework.

\subsection{The Information-Theoretic Interpretation of Training Objectives}

\label{sec:correspondence}

Based on our proposed framework, We establish a novel connection between the $g$ functions and the $f$-divergences. The result is summarized in Theorem \ref{thm:main-theorem} below. 

\begin{theorem}
    \label{thm:main-theorem}
    Let $\theta_F$ be the parameters for forward policies. 
    For each minimal cut $C\in\mathcal{C}$, the restrictions of both forward and backward flow functions on $C$ can be viewed as unnormalized distributions over it, denoted as $\hat{p}^C_F$ and $\hat{p}^C_B$, respectively. 
    
    If there exists $w:\mathcal{C}\to\mathbb{R}_+$ such that $\mu(o)=\hat{p}_F(o)\sum_{C\in\mathcal{C}, o\in C}w(C)$ for any $o\in\mathcal{O}$, then
    \begin{align}
    &\nabla_{\theta_F}\mathcal{L}_{\mathcal{O},\hat{p}_\theta,\mu,P_B,g}=\nabla_{\theta_F}\sum_{C\in\mathcal{C}}w(C)D_{f}(\hat{p}^C_B||\hat{p}^C_F),\,\text{where } f(t)=t\int_1^t\frac{g'(\log s)}{s^2}ds.
        % &\nabla_{\theta_B}\mathcal{L}_{\mathcal{O},\hat{p}_\theta,\mu,g}=\nabla_{\theta_B}\sum_{C\in\mathcal{C}}w(C)D_{f_2}(\hat{p}^C_B||\hat{p}^C_F),\text{where } f_2(t)=g(\log t)
    \end{align}
    % If there exists $w:\mathcal{C}\to\mathbb{R}_+$ such that $\mu(o)=\sum_{C\in\mathcal{C}, o\in C}w(C)\hat{p}^C_B(o)$ for any $o\in\mathcal{O}$, then
    % \begin{align*}
    %     &\nabla_{\theta_F}\mathcal{L}_{\mathcal{O},\hat{p}_\theta,\mu,g}=\nabla_{\theta_F}\sum_{C\in\mathcal{C}}w(C)D_{f}(\hat{p}^C_B||\hat{p}^C_F),\text{where } f(t)=tg(\log t)
    %     % &\nabla_{\theta_B}\mathcal{L}_{\mathcal{O},\hat{p}_\theta,\mu,g}=\nabla_{\theta_B}\sum_{C\in\mathcal{C}}w(C)D_{f_4}(\hat{p}^C_B||\hat{p}^C_F),\text{where } f_4(t)=\int_1^tg'(\log s)ds
    % \end{align*}
\end{theorem}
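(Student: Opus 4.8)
The plan is to compute both gradients explicitly and match them term-by-term over the objects in each minimal cut, using the log-derivative (score-function) identity to reconcile the two sides. Throughout I treat the backward flow $\hat{p}_B$ and the weights $w(C)$ as independent of the forward parameters $\theta_F$.

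First I would differentiate the left-hand side. Writing $\delta(o)=\log(\hat{p}_B(o)/\hat{p}_F(o))$ and treating the sampling/resampling weight $\mu(o)$ as a stop-gradient constant — the standard convention, since $\mu$ encodes the behavior distribution from which objects are drawn and one does not backpropagate through sampling — the chain rule gives
\[
\nabla_{\theta_F}\mathcal{L}=\sum_{o\in\mathcal{O}}\mu(o)\,g'(\delta(o))\,\nabla_{\theta_F}\delta(o).
\]
Since $\hat{p}_B$ does not depend on $\theta_F$, we have $\nabla_{\theta_F}\delta(o)=-\nabla_{\theta_F}\log\hat{p}_F(o)$. Substituting the hypothesis $\mu(o)=\hat{p}_F(o)\sum_{C\ni o}w(C)$ and exchanging the order of the two finite summations rewrites this as a weighted sum over minimal cuts, namely $-\sum_{C}w(C)\sum_{o\in C}\hat{p}_F(o)\,g'(\delta(o))\,\nabla_{\theta_F}\log\hat{p}_F(o)$.

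Next I would differentiate the right-hand side. For a fixed cut, the (pseudo) $f$-divergence reads $D_f(\hat{p}^C_B\|\hat{p}^C_F)=\sum_{o\in C}\hat{p}_F(o)\,f(r(o))$ with ratio $r(o)=\hat{p}_B(o)/\hat{p}_F(o)=e^{\delta(o)}$. Applying the product rule together with $\nabla_{\theta_F}\hat{p}_F(o)=\hat{p}_F(o)\nabla_{\theta_F}\log\hat{p}_F(o)$ and $\nabla_{\theta_F}r(o)=-r(o)\nabla_{\theta_F}\log\hat{p}_F(o)$ produces, per object, the coefficient $\hat{p}_F(o)\,[f(r(o))-r(o)f'(r(o))]$ in front of $\nabla_{\theta_F}\log\hat{p}_F(o)$. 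Crucially, here the score-function term coming from differentiating the denominator distribution $\hat{p}_F$ (the second argument of the divergence) is retained, in contrast to the stop-gradient treatment of $\mu$ on the left; reconciling these two conventions is exactly what must be verified, and it generalizes the observation of \citet{malkin2022gflownets} that the on-policy TB gradient, taken without the score-function correction, already coincides with the reverse-KL gradient. Matching the two sides then requires precisely that $f$ solve the first-order ODE $f(t)-tf'(t)=-g'(\log t)$ for all $t>0$. Verifying that the stated integral $f(t)=t\int_1^t g'(\log s)\,s^{-2}\,ds$ is a solution is a short computation: writing $f(t)=t\,h(t)$ gives $f'(t)=h(t)+g'(\log t)/t$, so $f(t)-tf'(t)=-g'(\log t)$. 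I would also record that this $f$ is a legitimate generator — $f(1)=0$ trivially, $f'(1)=g'(0)=0$ because a regression loss is minimized at the origin, and a further differentiation gives $f''(t)=g''(\log t)/t^2$, so convexity of $g$ transfers to convexity of $f$.

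I expect the main obstacle to be the careful bookkeeping of which quantities are differentiated: the argument lives or dies on treating $\mu$ as a fixed weight while fully differentiating the flow $\hat{p}_F$ inside the divergence, and on confirming that the residual score-function contributions on the two sides are identical rather than merely analogous. A secondary point requiring care is the minimal-cut structure itself: the hypothesis on $\mu$ is what licenses reorganizing the single sum over $\mathcal{O}$ into the weighted sum of divergences over cuts, and one should check that the double sum may be exchanged (it is finite) and that the restrictions $\hat{p}^C_F,\hat{p}^C_B$ genuinely agree with $\hat{p}_F,\hat{p}_B$ on each $C$, so that the per-object coefficients line up across the two computations.
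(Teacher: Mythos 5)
Your proposal is correct and takes essentially the same route as the paper's own proof: both rest on the product rule applied to $\sum_{o\in C}\hat{p}^C_F(o)f\bigl(\hat{p}^C_B(o)/\hat{p}^C_F(o)\bigr)$, yielding the per-object coefficient $f(r)-rf'(r)$, the identity $f(t)-tf'(t)=-g'(\log t)$ satisfied by the stated integral formula, and the hypothesis on $\mu$ to exchange the sum over objects with the weighted sum over minimal cuts. The only (welcome) difference is presentational: you compute both gradients and match them via the ODE, and you make explicit the stop-gradient treatment of $\mu$ that the paper's final step leaves implicit.
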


The theorem states that the expected gradient of the objective function equals the gradient of a weighted sum of $f$-divergence over minimal cuts if the sampling and resampling weights $\mu$ on each training object $o$ equals the forward flow times the accumulated weights on minimal cuts consisting of $o$. For example, $w(C)=\mathbb{I}[C=\mathcal{T}]$ corresponds to TB GFlowNets using on-policy sampling.  The detailed proof of Theorem~\ref{thm:main-theorem} is provided in  Appendix~\ref{appendix:thm:main}. We also provide a thorough discussion about the interpretations of FM, DB, and subTB loss under this framework. Please see Appendix~\ref{remark} for details. 

Note that when $g(t)=\frac{1}{2}t^2$, i.e., the popular squared loss, we obtain $f(t)=t-\log t-1$. Thus,  $D_f$ is the reverse KL divergence, recovering the results in \citet{malkin2022gflownets}. Compared to existing work, Theorem \ref{thm:main-theorem} offers a general connection and applies to a wide range of algorithms shown in Table~\ref{algo_sum}. 
Below, we conclude this section with the following two remarks on the connections between function $g$ and $f$. 

\begin{remark}
    Note that $f(1)=0$, $f'(1)=g'(0)$, and $f''(t)=\frac{g''(\log t)}{t^2}$. If $g$ is twice differentiable, then $D_f$ is an $f$-divergence if and only if $g$ is convex and is minimized at zero.
\end{remark}

\begin{remark}
    \label{remark:ftog}
    Solving for $g$ from $f(t)=t\int_1^t\frac{g'(\log s)}{s^2}ds$ and $g(0)=0$ gives $g(t)=f(e^t)-\int_1^{e^t}\frac{f(s)}{s}ds$. 
\end{remark}

\subsection{Designing New Regression Losses}
\label{sec:loss_design}

Equipped with the connection established in Theorem \ref{thm:main-theorem}, we now show how one can build upon it and design regression losses with two important properties: \textbf{zero-forcing} and \textbf{zero-avoiding}. A zero-forcing objective leads to a conservative result, while a zero-avoiding objective offers a diverse approximated distribution. 
As pointed out by previous studies~\citep{minka2005divergence, go2023aligning}, zero-forcing property encourages exploitation, while zero-avoiding property encourages exploration. Therefore, a zero-avoiding loss may converge faster to a more diverse distribution, while a zero-forcing one may converge to a distribution with a higher average reward.

To this end, we first study the effect of using different divergence measures as optimization objectives. 
\begin{proposition}[\citet{liese2006divergences}]
\label{fdf}
Denote $f(0)=\lim_{t\to 0^+} f(t)$, $f'(\infty)=\lim_{t\to+\infty}\frac{f(t)}{t}$,
\begin{enumerate}[leftmargin=15pt,parsep=1pt, itemsep=1pt, topsep=1pt]
\item Suppose $f(0)=\infty$, then $D_f(p||q)=\infty$ if $p(x)=0$ and $q(x)>0$ for some $x$.
\item Suppose $f'(\infty)=\infty$, then $D_f(p||q)=\infty$ if $p(x)>0$ and $q(x)=0$ for some $x$.
\end{enumerate}
\end{proposition}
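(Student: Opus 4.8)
The plan is to read off everything from the extended definition of $D_f(p\|q)$ stated above, which presents the divergence as a sum over $x$ of terms $q(x)f\!\left(p(x)/q(x)\right)$ plus a single correction term $f'(\infty)\,p(\{x\in\mathcal{X}:q(x)=0\})$. The strategy is to first argue that \emph{every} one of these contributions is non-negative, so that $D_f$ is a sum of non-negative quantities; then for each part I only need to exhibit one particular contribution that is forced to equal $+\infty$, and non-negativity of the rest guarantees the total is $+\infty$.

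The non-negativity step uses only the standing hypotheses on $f$ (twice differentiable, convex, $f(1)=f'(1)=0$). By convexity $f$ lies above its tangent line at $t=1$, and since $f(1)=f'(1)=0$ that tangent is the zero line; hence $f(t)\geq 0$ for all $t\in\mathbb{R}_{++}$. This immediately forces each summand $q(x)f\!\left(p(x)/q(x)\right)\geq 0$. It also gives $f'(\infty)\geq 0$ directly: since $f(t)\geq 0$ and $t>0$ we have $f(t)/t\geq 0$, so $f'(\infty)=\lim_{t\to\infty} f(t)/t\geq 0$, and therefore the correction term $f'(\infty)\,p(\{q=0\})$ is non-negative as well.

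For part (1), I pick $x_0$ with $p(x_0)=0$ and $q(x_0)>0$; the ratio is $p(x_0)/q(x_0)=0$, so the corresponding summand is $q(x_0)f(0)=q(x_0)\cdot\infty=+\infty$ under the hypothesis $f(0)=\infty$ together with $q(x_0)>0$. For part (2), I pick $x_0$ with $p(x_0)>0$ and $q(x_0)=0$; then $x_0\in\{x:q(x)=0\}$, so $p(\{x:q(x)=0\})\geq p(x_0)>0$, and the correction term is $f'(\infty)\cdot(\text{positive})=+\infty$ under $f'(\infty)=\infty$. In each case adding the remaining non-negative contributions leaves the value at $+\infty$. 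The only genuinely delicate point, and the reason the non-negativity step is not optional, is the arithmetic with $+\infty$: one must be sure that a single $+\infty$ contribution added to an otherwise non-negative (hence non-$(-\infty)$) sum yields $+\infty$ rather than an indeterminate $\infty-\infty$ form; everything else is bookkeeping with the extended definition.
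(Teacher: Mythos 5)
Your proof is correct. Note, however, that the paper itself offers no proof of this proposition at all: it is imported verbatim from \citet{liese2006divergences} as a known fact, so there is no "paper approach" to compare against. Your argument is the natural direct verification from the paper's stated definition of $D_f$, and it is sound: the tangent-line argument (convexity plus $f(1)=f'(1)=0$ forces $f\geq 0$ on $\mathbb{R}_{++}$, hence also $f'(\infty)\geq 0$) is exactly what is needed to make the extended-real arithmetic legitimate, since it rules out any $-\infty$ contributions and hence any indeterminate $\infty-\infty$ form when one term blows up; then part (1) follows from the summand $q(x_0)f(0)$ with $q(x_0)>0$ and $f(0)=\infty$, and part (2) from the correction term $f'(\infty)\,p(\{x: q(x)=0\})\geq f'(\infty)\,p(x_0)$ with $f'(\infty)=\infty$. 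One small point worth making explicit: in part (2) the summand at $x_0$ itself, $q(x_0)f(p(x_0)/q(x_0))$, involves an undefined ratio and is handled by the convention that terms with $q(x)=0$ contribute $0$ to the sum (the correction term exists precisely to account for that mass); your non-negativity umbrella covers this in spirit, but naming the convention would close the last gap between the formula as written and the extended-real bookkeeping you perform.
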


In particular, $D_\alpha(p||q)$ for $\alpha\leq0$, including reverse KL divergence, satisfies the first condition, while $D_\alpha(p||q)$ for $\alpha\geq1$, including forward KL divergence, satisfy the second condition. Proposition \ref{fdf} leads to the following results of approximating a distribution by using $f$-divergence.
\begin{proposition}[\citet{liese2006divergences}]
\label{zero}
Let $S$ be a subset of the collection of distributions over $\mathcal{X}$. 
Let $\hat{p}_S\in\mathop{\arg\min}_{q\in S}D_f(p||q)$.

\begin{enumerate}[leftmargin=15pt,parsep=1pt, itemsep=1pt, topsep=1pt]
\item \textbf{Zero-forcing}: Suppose $f(0)=\infty$, then $\hat{p}_S(x)=0$ if $p(x)=0$.
\item \textbf{Zero-avoiding}: Suppose $f'(\infty)=\infty$, then $\hat{p}_S(x)>0$ if $p(x)>0$.
\end{enumerate}
\end{proposition}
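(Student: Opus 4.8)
The plan is to obtain both statements as immediate consequences of Proposition~\ref{fdf} through a minimality argument. The crucial observation is that Proposition~\ref{fdf} pinpoints exactly the two configurations that send $D_f(p||q)$ to $+\infty$: when $f(0)=\infty$, the divergence is infinite as soon as the second argument puts mass where $p$ vanishes, and when $f'(\infty)=\infty$, it is infinite as soon as the second argument vanishes where $p$ is positive. Since $\hat{p}_S$ is assumed to minimize $D_f(p||\cdot)$ over $S$, it suffices to show that any violation of the claimed support condition forces $D_f(p||\hat{p}_S)=\infty$, which is incompatible with $\hat{p}_S$ being a finite-valued minimizer.

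I would carry out the two parts in parallel, as they are symmetric. For \textbf{zero-forcing}, assume $f(0)=\infty$ and suppose toward a contradiction that some $x$ satisfies $p(x)=0$ yet $\hat{p}_S(x)>0$. This is precisely the hypothesis of the first item of Proposition~\ref{fdf} applied to the pair $(p,\hat{p}_S)$, so $D_f(p||\hat{p}_S)=\infty$, contradicting the optimality of $\hat{p}_S$; hence $\hat{p}_S(x)=0$ whenever $p(x)=0$. For \textbf{zero-avoiding}, assume $f'(\infty)=\infty$ and suppose some $x$ satisfies $p(x)>0$ yet $\hat{p}_S(x)=0$; the second item of Proposition~\ref{fdf} then yields $D_f(p||\hat{p}_S)=\infty$, again contradicting minimality, so $\hat{p}_S(x)>0$ whenever $p(x)>0$.

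The one place requiring genuine care, and the main obstacle, is guaranteeing that the optimal value $\min_{q\in S}D_f(p||q)$ is finite, since the contradiction above compares an infinite value against a strictly smaller attainable one. This holds the moment $S$ contains even a single distribution $q$ with $D_f(p||q)<\infty$; in the approximation regime of interest, where $S$ is rich enough to contain $p$ itself (so that $D_f(p||p)=0$) or a close surrogate, finiteness is automatic, and I would record it as a standing assumption on $S$ so that the two contradiction arguments apply verbatim. It is worth noting that the argument uses only $\hat{p}_S\in\arg\min$ together with the boundary behavior of $f$, never its explicit form, which is exactly why the zero-forcing/zero-avoiding dichotomy transfers cleanly to the regression losses $g$ via the correspondence $f(t)=t\int_1^t g'(\log s)/s^2\,ds$ established earlier.
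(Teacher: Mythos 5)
Your proof is correct and follows exactly the route the paper intends: the paper gives no separate proof of Proposition~\ref{zero}, stating only that ``Proposition~\ref{fdf} leads to the following results'' (and citing \citet{liese2006divergences}), which is precisely your contradiction-with-minimality argument applied to the pair $(p,\hat{p}_S)$. Your explicit flagging of the finiteness requirement $\min_{q\in S}D_f(p||q)<\infty$ (e.g., guaranteed when some $q\in S$ has finite divergence from $p$) is the right caveat, one the paper leaves implicit.
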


Proposition~\ref{zero} suggests that when $S$ does not cover the target distribution $p$, the best approximation may vary according to the divergence chosen as the objective.

Since the objective functions for GFlowNets with varying regression losses are closely related to different divergence measures, we similarly define their zero-forcing and zero-avoiding properties.

% \longbo{this paragraph is not clear.}All the regression functions have a common unique minimum $t=0$. Hence, no matter which $g$ is chosen, $\hat{p}_F^T$ converges to $R$ as the loss reduces to $0$. However, when the state space is large but the number of parameters is limited, which is a common case in practice, the loss may converge to a non-zero value. In such cases, we are particularly interested in the zero-forcing and zero-avoiding properties. 

\begin{definition}
    An objective function $\mathcal{L}$ for training GFlowNets is
    \begin{enumerate}[leftmargin=15pt,parsep=1pt, itemsep=1pt, topsep=1pt]
    \item \textbf{Zero-forcing}: if for any parameter space $\Theta$ and $\theta^*=\mathop{\arg\min}_{\theta\in\Theta}\mathcal{L}(\theta)$, \begin{align*}
            \forall s\in S^f: R(s)=0\implies\hat{P}_T(s;\theta^*)=0,
        \end{align*}
   \item \textbf{Zero-avoiding}: if for any parameter space $\Theta$ and $\theta^*=\mathop{\arg\min}_{\theta\in\Theta}\mathcal{L}(\theta)$, \begin{align*}
            \forall s\in S^f: R(s)>0\implies\hat{P}_T(s;\theta^*)>0.
        \end{align*}
    \end{enumerate}
    In such cases, we also say that the regression function $g$ itself is \textbf{zero-forcing} or \textbf{zero-avoiding}.
\end{definition}

We then have the following theorem regarding the zero-forcing and zero-avoiding objective functions and regression losses of GFlowNets.  
\begin{theorem}
    \label{zfza}
    Let $\mathcal{L}$ be an objective function for training GFlowNets, whose regression loss $g$ corresponds to $D_f$ according to Theorem \ref{thm:main-theorem}. If $D_f$ is zero-forcing, then $\mathcal{L}$ and $g$ are both zero-forcing. If $D_f$ is zero-avoiding, then $\mathcal{L}$ and $g$ are both zero-avoiding. 
    \vspace{-9pt}
\end{theorem}

% \todoyfz{Resizebox can be used. }
\begin{table}[htb]
\centering
\caption{Four representative $g$ functions and their corresponding $f$-divergences. Quadratic loss corresponds to reverse KL-divergence or the $\alpha$-divergence with $\alpha\to 0$. Linex$(1)$ corresponds to forward KL-divergence or the $\alpha$-divergence with $\alpha\to 1$. Linex$\left(1/2\right)$ corresponds to Hellinger distance~\citep{hellinger1909neue} or the $\alpha$-divergence with $\alpha=0.5$. Shifted-Cosh corresponds to an $f$-divergence that is both zero-forcing and zero-avoiding.} 
\vspace{-6pt}
\resizebox{\linewidth}{!}{
\begin{tabular}{cllcccc}
\toprule
\textbf{Loss}& $\bm{g(t)}$ & $\bm{f(t)}$ & $\bm{f(0)}$ & $\bm{f'(\infty)}$ & \textbf{Zero-forcing} & \textbf{Zero-avoiding} \\ \midrule
Quadratic & $\frac{1}{2}t^2$ & $t-\log t-1$ & $\infty$ & $1$ & \checkmark &  \\
Linex$(1)$ & $e^t-t-1$ & $t\log t-t+1$ & $1$ & $\infty$ &  & \checkmark \\
Linex$\left(1/2\right)$ & $4e^{\frac{t}{2}}-2t-4$ & $2t-4\sqrt{t}+2$ & $2$ & $2$ &  &  \\
Shifted-Cosh & $e^t+e^{-t}-2$ & $t\log t-\frac{t}{2}+\frac{1}{2t}$ & $\infty$ & $\infty$ & \checkmark & \checkmark \\ \bottomrule
\end{tabular}
}
\vspace{-3pt}
\label{loss}
\end{table}

According to Theorem \ref{zfza}, the quadratic regression loss $g(t)=\frac{1}{2}t^2$ is a zero-forcing regression loss and focuses on exploitation.  
% To obtain a zero-avoiding loss, we can solve for $g$ from an arbitrary zero-avoiding $f$, for example, $f(t)=t\log t-t-1$ the forward KL divergence, which gives $g(t)=e^t-t-1$ the Linex$(1)$ function. It's also possible for a $g$ function to have neither or both properties (See Table \ref{loss}).  \longbo{highlight here that we can easily design/customize a desired function $g$, with the understanding and help from $f$, whereas directly designing $g$ is hard and does not give intuition}
Combined with Remark \ref{remark:ftog} that enables us to determine a $g$ from an arbitrary $D_f$, we can easily find regression losses with both, either or neither of the zero-forcing and zero-avoiding properties. Since these two properties are finally rooted in $f(0)$ and $f'(\infty)$, our framework allows us to directly design a desired $g$ loss from a desired $D_f$. This provides a systematic and principled way of designing regression losses. For example, to obtain a zero-avoiding loss that focuses on exploration, we can solve for $g$ from $f(t)=t\log t-t-1$ the forward KL divergence, which gives $g(t)=e^t-t-1$ the Linex$(1)$ function\citep{garg2023extreme}. We also design Linex$(1/2)$ and Shifted-Cosh. The former is neither zero-forcing nor zero-avoiding, while the latter is both zero-forcing and zero-avoiding (see Table \ref{loss}).
In addition, we have also derived another five
novel losses, corresponding to the forward and backward $\chi^2$ 
distance, total variation, symmetric KL divergence, and Jensen-Shannon divergence, respectively Please see Appendix \ref{appendix:losses} for details.

Note that a convex function $g$ with $g(0)=g'(0)=0$ ensures the whole objective function is valid in the sense that the target distribution is perfectly matched if and only if the objective function is minimized, regardless of the choices of backward policy, training objects, parameterization, and exploration strategies.
We will also see that they preserve the zero-forcing and zero-avoiding properties well from the empirical results in the following section.

\section{Experiments}
\label{sec:experiments}

In this section, we consider four representative $g$-functions (Table \ref{loss}) and evaluate their performances over Flow-matching GFlowNets, Trajectory-balance GFlowNets, Detailed-balance GFlowNets, and Sub-trajectory-balance GFlowNets, across different choices of backward policies and sampling strategies. We consider the following three popular benchmarks, hyper-grid, bit-sequence generation, and molecule generation. Although the sampling and resampling weights $\mu$ may not fully meet the conditions of Theorem \ref{thm:main-theorem}, the effects of zero-forcing and zero-avoiding properties are significant, demonstrating great compatibility with existing algorithms.

\subsection{Hyper-grid}

We first consider the didactic environment hyper-grid introduced by \cite{bengio2021flow}.
In this setting, the non-terminal states are the cells of a $D$-dimensional hypercubic grid with side length $H$. Each non-terminal state has a terminal copy. The initial state is at the coordinate $x=(0,0,\cdots, 0)$. For a non-terminal state, the allowed actions are to increase one of the coordinates by $1$ without exiting the grid and to move to the corresponding terminal state. 

The reward of coordinate $x=(x_1,\cdots, x_D)$ is given according to
\begin{align*}
    R(x)=R_0+R_1\prod_{i=1}^D\mathbb{I}\left[\left|\frac{x_i}{H}-0.5\right|>0.25\right]+R_2\prod_{i=1}^D\mathbb{I}\left[0.3<\left|\frac{x_i}{H}-0.5\right|<0.4\right],
\end{align*}
where $0<-R_1<R_0\ll R_2$. Therefore, there are $2^D$ reward modes near the corners of the hypercube. 

We conducted experiments in 4-dimensional and 5-dimensional grids with $H=20, R_0=10^{-4}, R_1=-9.9 \times 10^{-5}, R_2=1-10^{-6}$. The backward policy is learned using the same objectives as the forward policy. We use the forward policy to sample training objects. We plot the empirical $L_1$ errors between $P_T$ and $P_R$ in Figure \ref{hypergrid_result}. Additional details can be found in Appendix \ref{appendix:hypergrid}.

\begin{figure}[htb]
    \centering
    \vspace{-6pt}
    {\includegraphics[width=0.23\textwidth, trim=0 0 1.1cm 0.5cm, clip]{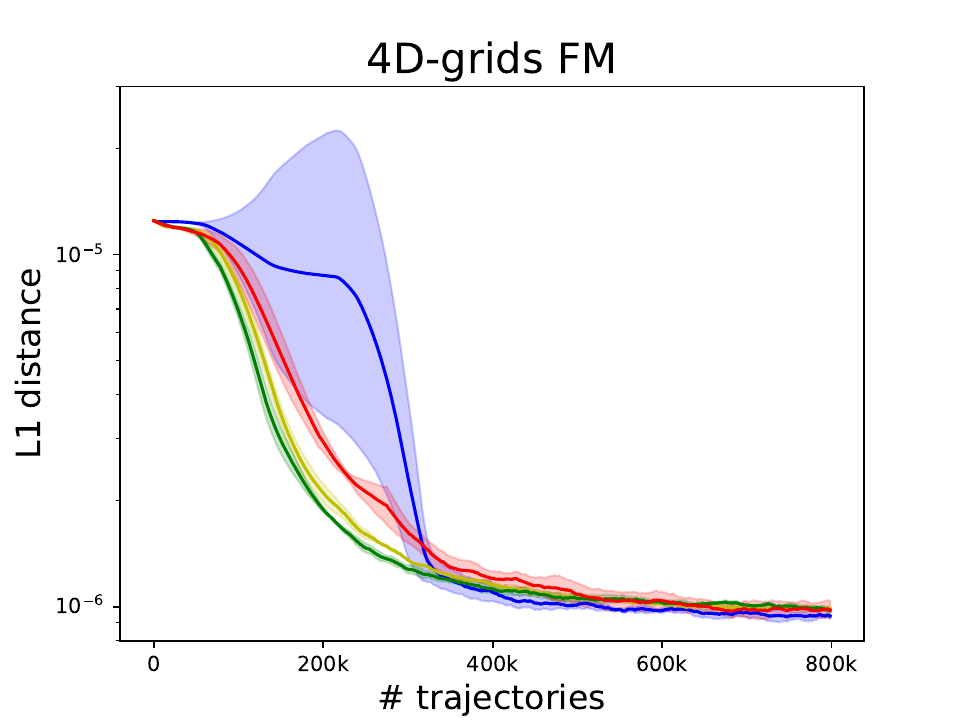}}
    {\includegraphics[width=0.23\textwidth, trim=0 0 1.1cm 0.5cm, clip]{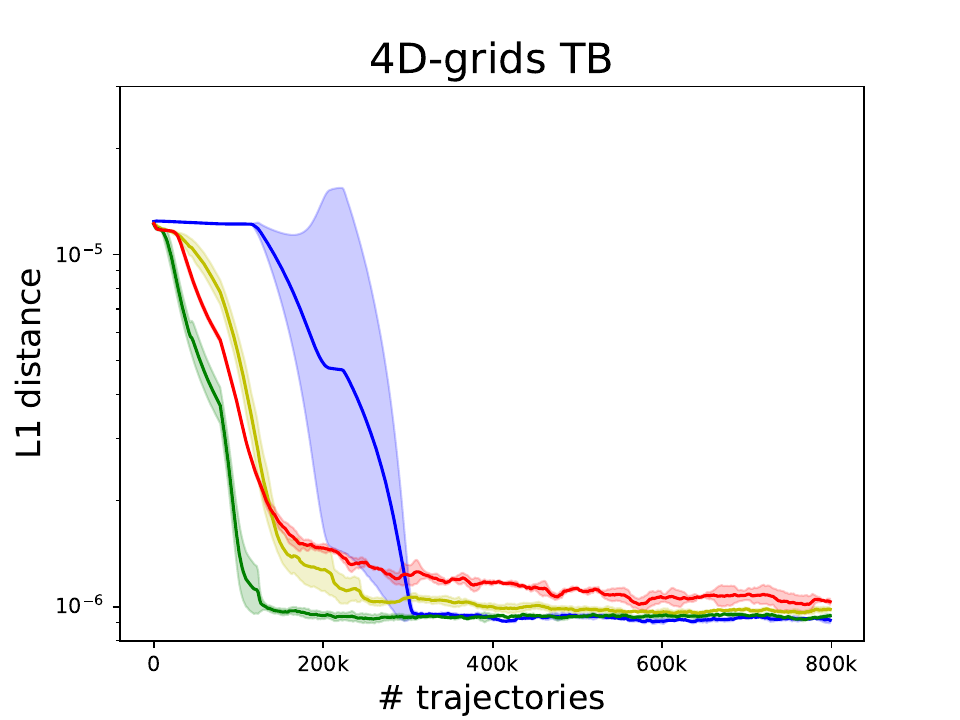}}
    {\includegraphics[width=0.23\textwidth, trim=0 0 1.1cm 0.5cm, clip]{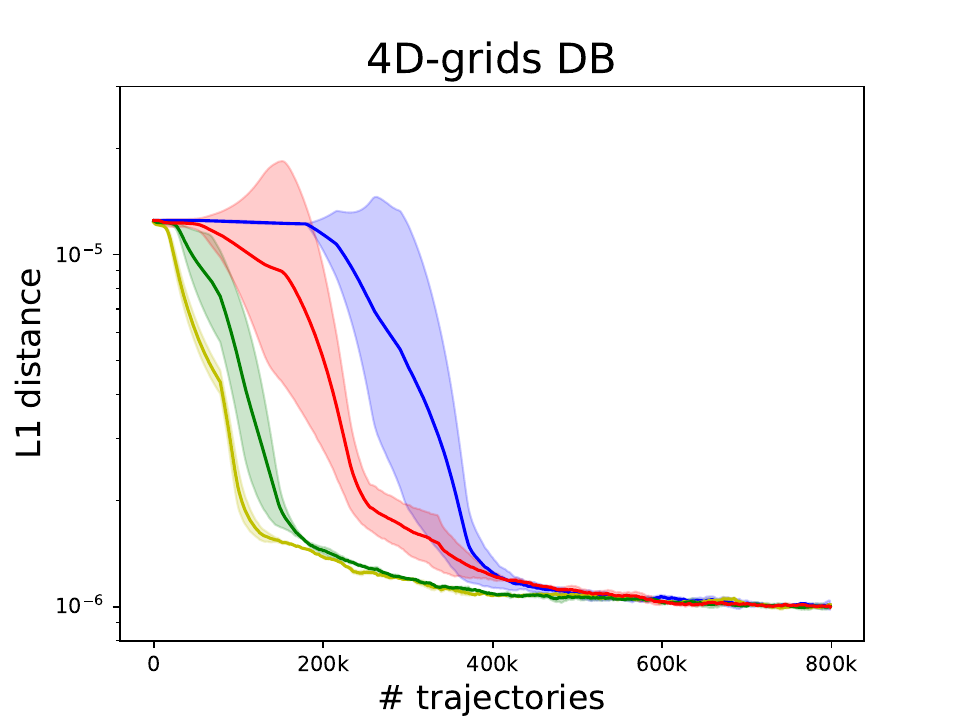}}
    {\includegraphics[width=0.23\textwidth, trim=0 0 1.1cm 0.5cm, clip]{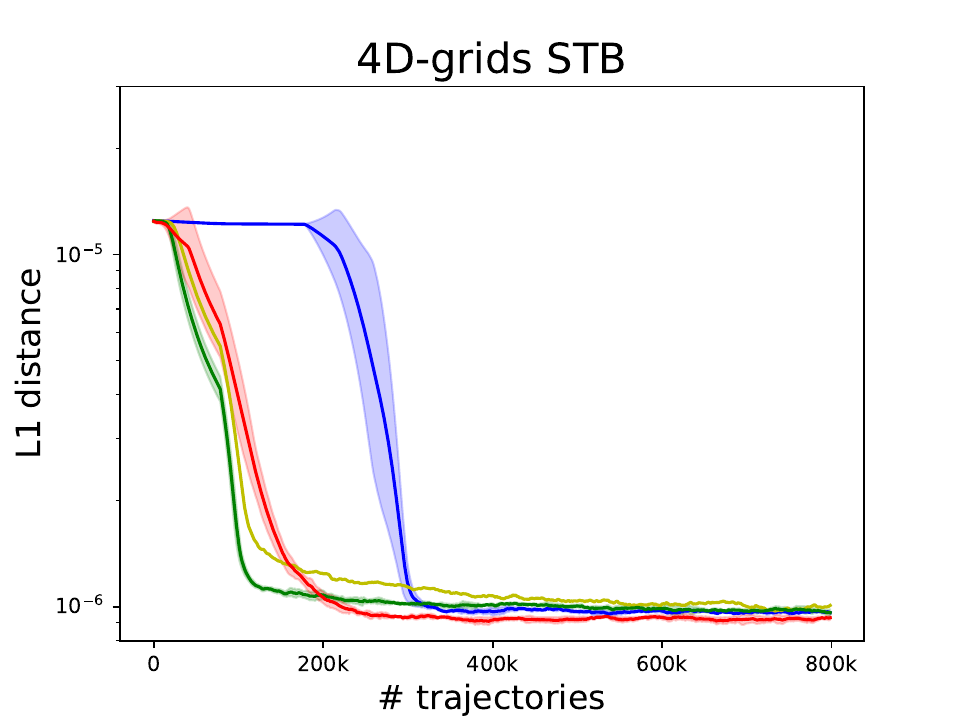}}
    {\includegraphics[width=0.23\textwidth, trim=0 0 1.1cm 0.5cm, clip]{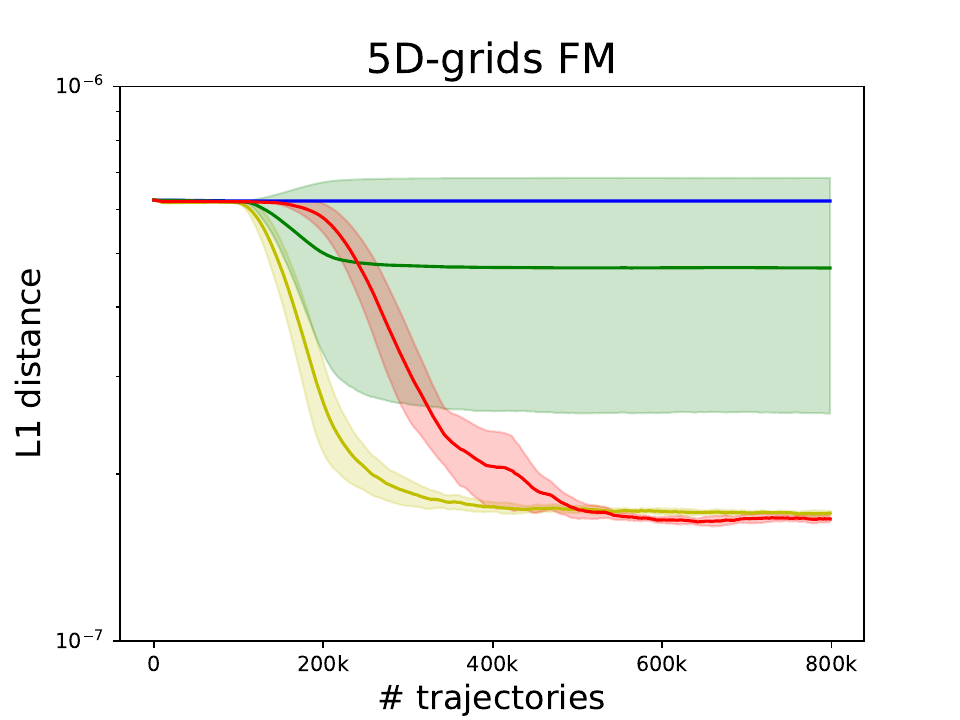}}
    {\includegraphics[width=0.23\textwidth, trim=0 0 1.1cm 0.5cm, clip]{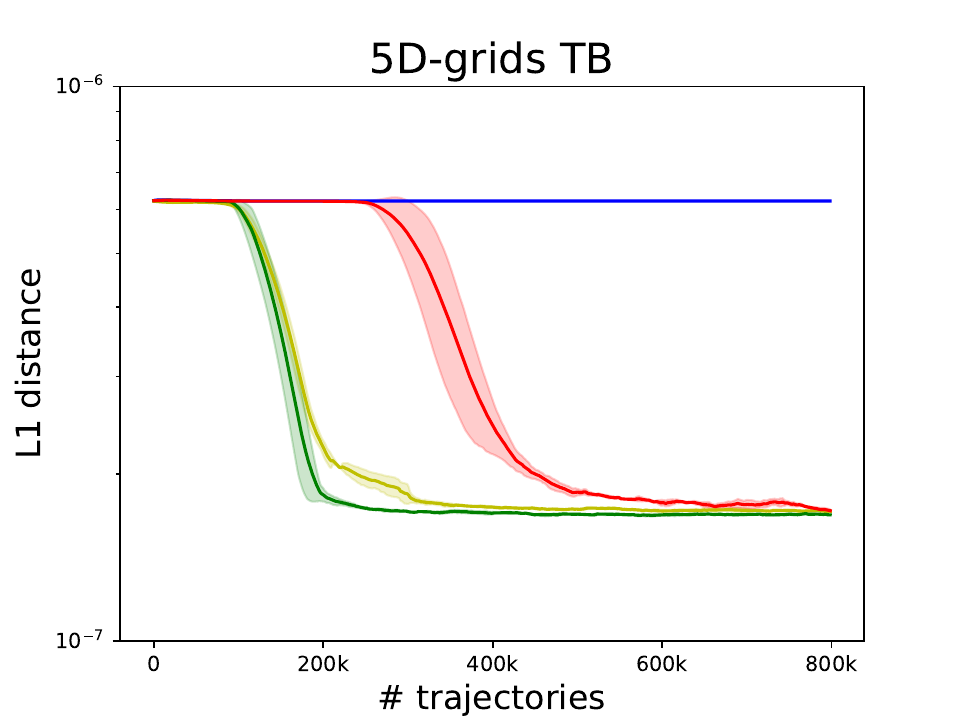}}
    {\includegraphics[width=0.23\textwidth, trim=0 0 1.1cm 0.5cm, clip]{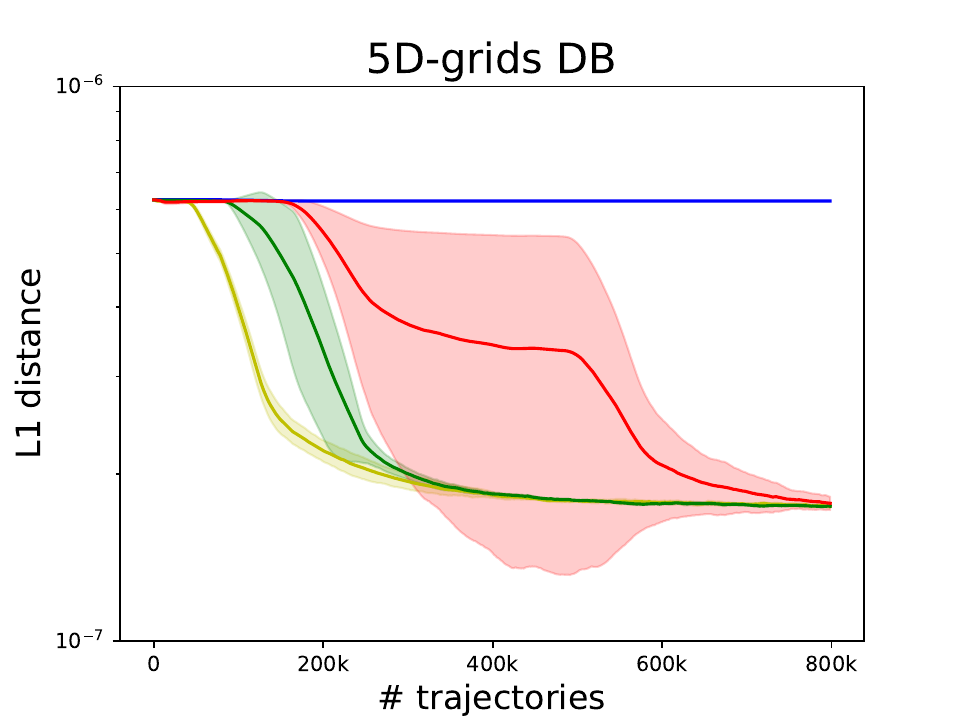}}
    {\includegraphics[width=0.23\textwidth, trim=0 0 1.1cm 0.5cm, clip]{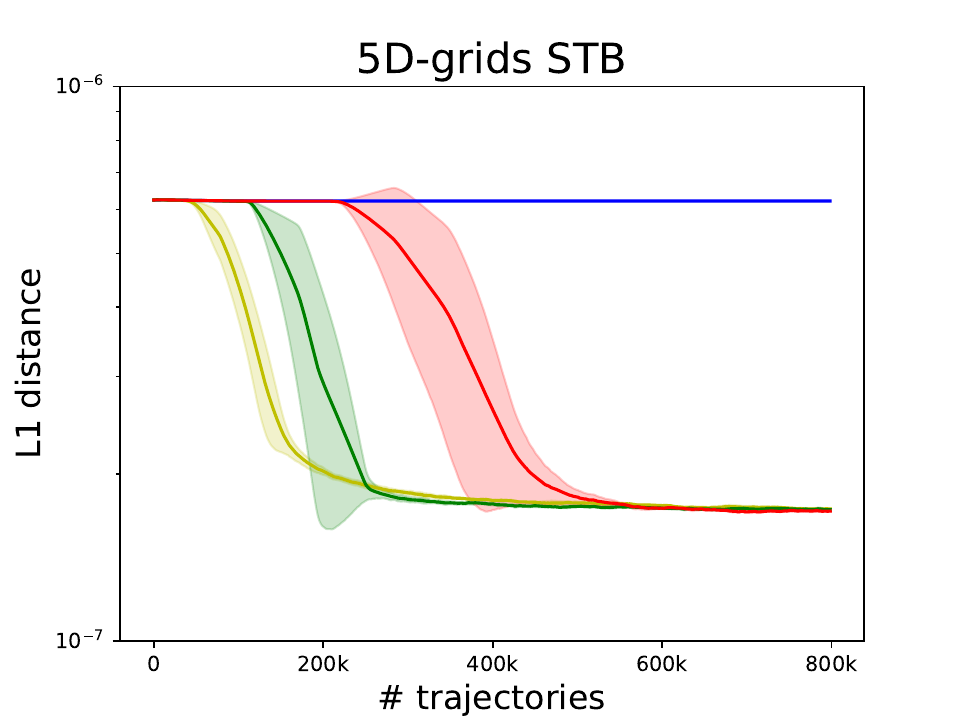}}
    {\includegraphics[width=0.7\textwidth, trim=0.5cm 15.6cm 4.5cm 4.2cm, clip]{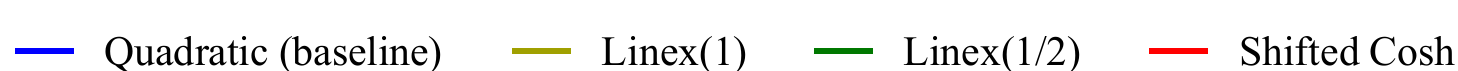}}
    \caption{Hyper-grid results: the empirical L1 distance between $P_T$ and $P_R$.}
    \label{hypergrid_result}
    \vspace{-9pt}
\end{figure}

As shown in Figure \ref{hypergrid_result}, the quadratic loss (baseline) converges the slowest among the four losses in 4D grids, and completely fails in 5D grids, while the other three losses remain robust in most of the cases. This is because the quadratic loss has the poorest exploration ability. Despite the differences in convergence speed, the $L_1$ errors between $P_T$ and $P_R$ are almost the same at convergence when using different regression functions. 

\subsection{Bit-sequence generation}
\label{subsec:bitseq}

In our second experimental setting, we study the bit-sequence generation task proposed by \cite{malkin2022trajectory} and \cite{tiapkin2024generative}. The goal is to generate binary strings of length $n$ given a fixed word length $k\mid n$. In this setup, an $n$-bit string is represented as a sequence of $n/k$ $k$-bit words. The generation process starts with a sequence of $n/k$ special empty words. At each step, a valid action replaces an empty word with any $k$-bit word. Terminal states are sequences with no empty words. The reward is defined based on the minimal Hamming distance to any target mode in the given set $M\subset\mathbb{Z}_2^n$. Specifically, $R(x)=\exp\left\{-\min_{x'\in M}d(x,x')\right\}$.

\begin{figure}[t]
    \centering
    {\includegraphics[width=0.27\textwidth, trim=0 0 1.1cm 0.5cm, clip]{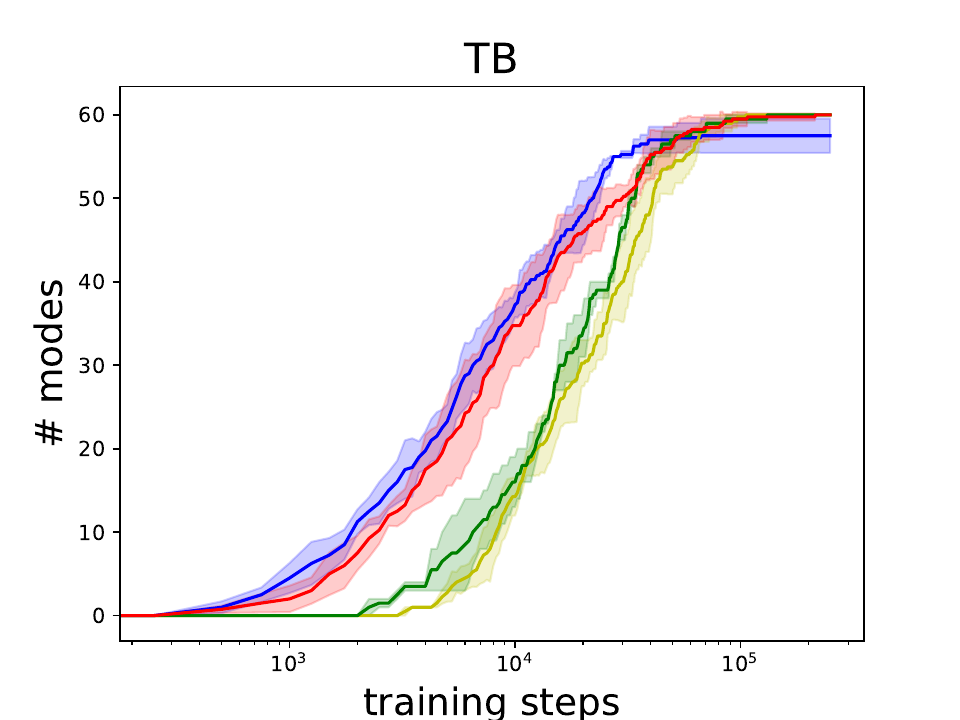}}
    {\includegraphics[width=0.27\textwidth, trim=0 0 1.1cm 0.5cm, clip]{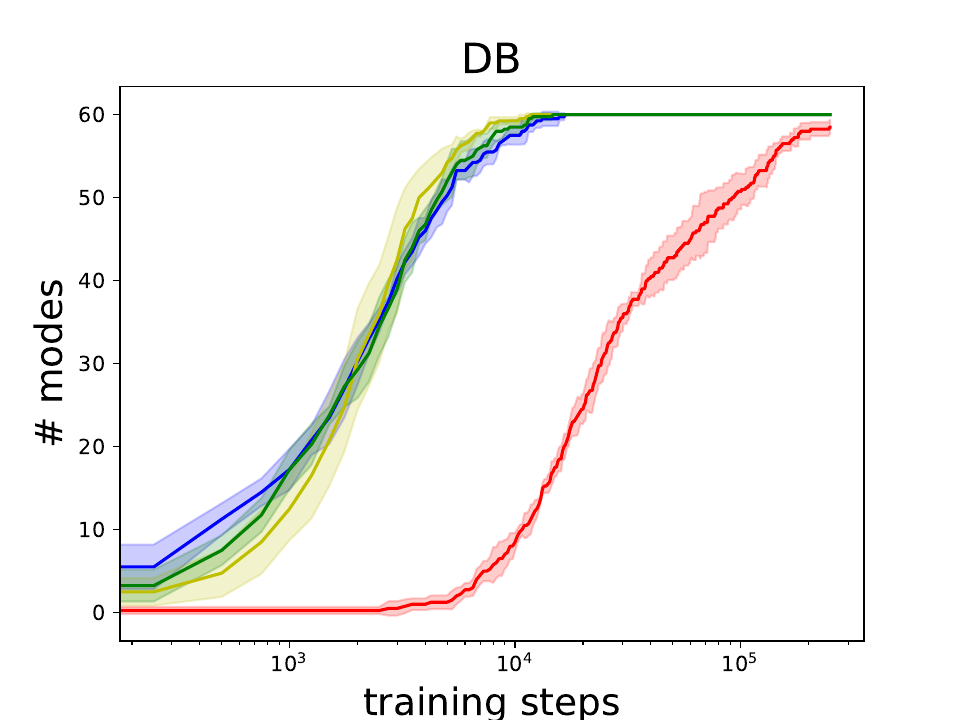}}
    {\includegraphics[width=0.27\textwidth, trim=0 0 1.1cm 0.5cm, clip]{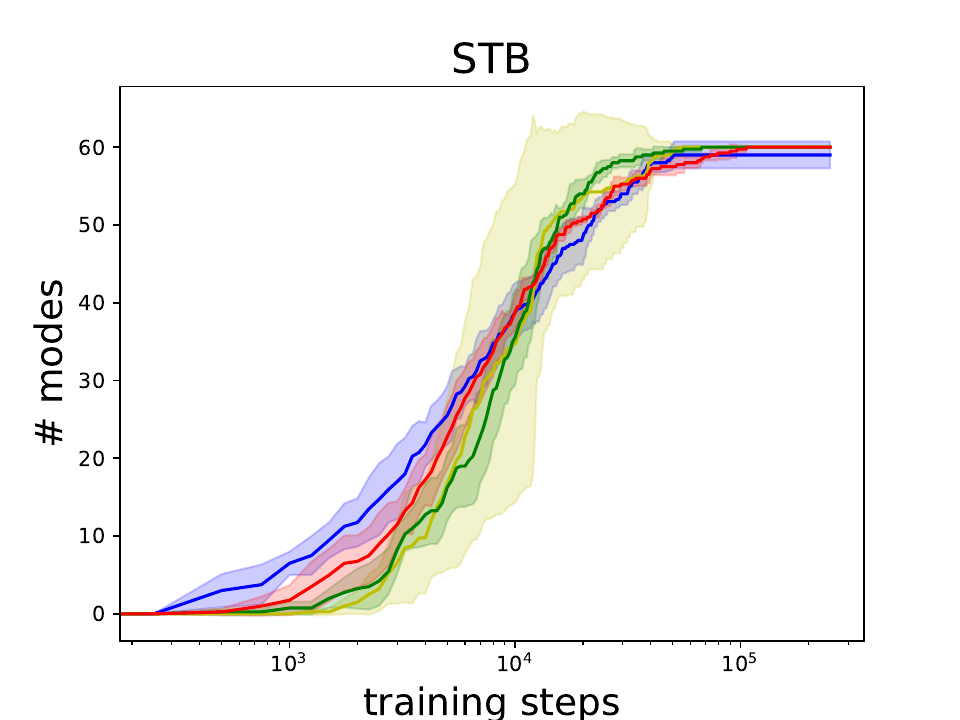}}
    {\includegraphics[width=0.72\textwidth, trim=0.5cm 15.4cm 4.5cm 4cm, clip]{figures/legend.pdf}}
    \caption{The number of modes found by the algorithm during training.}
    \label{bitseq_mode}
    \vspace{-9pt}
\end{figure}

In our experiments, we follow the setup in \cite{tiapkin2024generative} where $n=120, k=8, |M|=60$. $P_B$ is fixed to be uniform during training. We use the $\epsilon$-noisy forward policy with a random action probability of $0.001$ to sample training objects and the forward-looking style parameterizations for DB and STB experiments. We evaluate the number of modes found during training (the number of bit sequences in $M$ such that a candidate within a distance $\Delta=30$ has been generated) as well as the Spearman Correlation between $P_T$ and $P_R$ over a test set, which has also been adopted by \cite{malkin2022trajectory}, \cite{madan2023learning} and \cite{tiapkin2024generative}. Additional details can be found in Appendix \ref{appendix:bitseq}.

\begin{table}[htb]
\small
    \centering
    \caption{The number of runs that find all modes within 250k steps, and the median of the steps before they find all modes.}
    \begin{tabular}{lcccc}
    \toprule
         &  Quadratic (baseline) &  Linex$(1)$ &  Linex$\left(1/2\right)$ & Shifted-Cosh \\ \midrule
    TB   &  1/5, \ \ \ --\ \ \ \ &  \underline{5/5}, 98.0k &  \underline{5/5}, 111.2k & 4/5, \textbf{92.2k} \\
    DB   &  \underline{5/5}, 13.4k&  \underline{5/5}, \textbf{10.8k} &  \underline{5/5}, 11.7k  & 0/5, \ \ \ --\ \ \ \ \\
    STB  &  4/5, 50.6k&  \underline{5/5}, \textbf{20.3k} &  \underline{5/5}, 55.9k  & \underline{5/5}, 90.0k \\
    \bottomrule
    \end{tabular}
    \label{bitseq_allmode}
    \vspace{-6pt}
\end{table} 
\begin{table}[htb]
\small
\centering
\caption{The Spearman correlation between $P_T$ and $P_R$ over a test set (the higher the better). The failed runs that modal collapse happened are eliminated.}
\begin{tabular}{lcccc}
\toprule
    & Quadratic (baseline) & Linex$(1)$ & Linex$\left(1/2\right)$ & Shifted-Cosh \\ \midrule
zero-forcing& \ding{51} & \ding{55} & \ding{55} & \ding{51} \\ \midrule
TB  & $\underline{0.8081}{(\pm0.0159)}$ & $0.7421(\pm0.0216)$ & $0.7454(\pm0.0021)$ & $\mathbf{0.8122}{(\pm0.0145)}$ \\
DB  & $\underline{0.7907}{(\pm0.0175)}$ & $0.7464(\pm0.0107)$ & $0.7580(\pm0.0132)$ & $\mathbf{0.8213}{(\pm0.0094)}$ \\
STB & $\underline{0.8088}{(\pm0.0169)}$ & $0.7517(\pm0.0246)$ & $0.7711(\pm0.0190)$ & $\mathbf{0.8132}{(\pm0.0149)}$ \\ \bottomrule
\end{tabular}
\label{bitseq_cor}
\vspace{-6pt}
\end{table}

As shown in Figure \ref{bitseq_mode}, the quadratic loss seems to find new modes faster than the other three, but it always slows down and then is overtaken before finding all modes. As shown in Table \ref{bitseq_allmode}, quadratic loss fails to find all modes in one of the five STB runs, and four out of the five TB runs. On the contrary, Linex(1) and Linex(1/2) succeed in finding all modes in all 15 runs with three different settings, and Linex(1) is always faster. The performance of shifted-Cosh varies from different algorithms. As we analyzed in Section \ref{sec:loss_design}, a zero-avoiding loss benefits exploration, while a zero-forcing loss does the opposite. These results are consistent with our analysis in general.

In this environment, the state space is so large that the training objects can not fully cover the whole space. Consequently, although the algorithms appear to converge, the distribution $\hat{P}_T$ only approximates $P_R$ rather than perfectly matching it. In such cases, zero-forcing losses have advantages on the qualities of samples compared to non-zero-forcing ones. As shown in Table \ref{bitseq_cor}, zero-forcing losses (Quadratic and Shifted-Cosh) result in a higher correlation between $P_T$ and $P_R$, meaning that they fit the target distribution better within its support. Besides, we also observe that for TB GFlowNets with quadratic loss, the forward policy sometimes collapses to fitting only a small proportion of the modes in the target distribution, resulting in extremely low correlation. We eliminate these runs when presenting Table \ref{bitseq_cor}.

\subsection{Molecule generation}

\begin{figure}[htb]
    \centering
    %{\includegraphics[width=0.24\textwidth]{figures/mols_rew.png}}
    %{\includegraphics[width=0.24\textwidth]{figures/mols_sim.png}}
    %{\includegraphics[width=0.48\textwidth]{figures/mols_rew2.png}}
    %{\includegraphics[width=0.48\textwidth]{figures/mols_sim2.png}}
    {\includegraphics[width=0.24\textwidth, trim=0 0 1.1cm 0.5cm, clip]{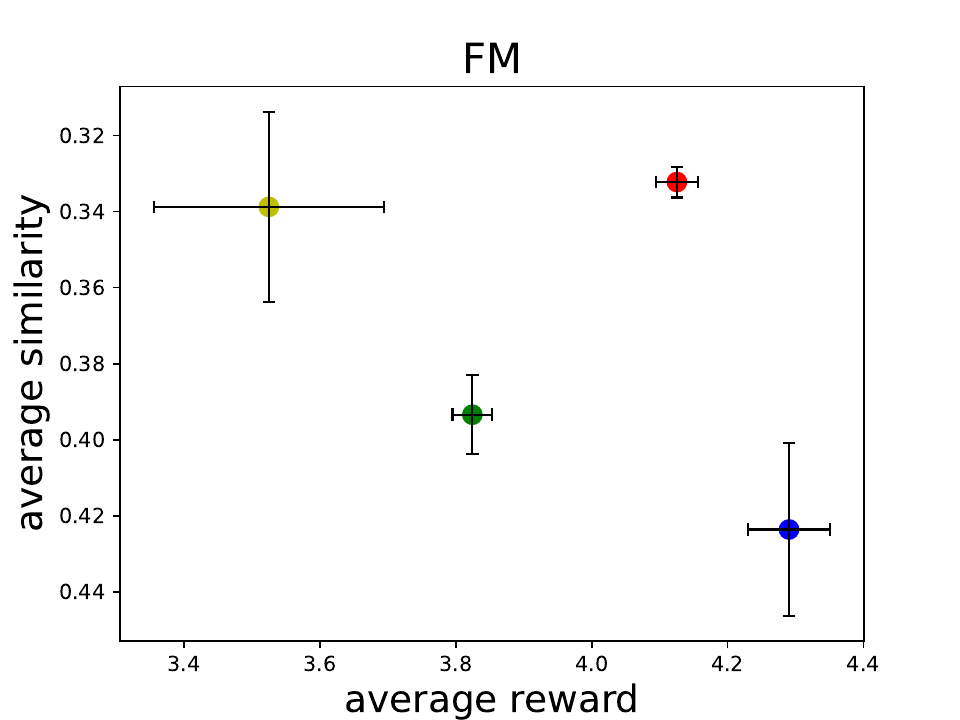}}
    {\includegraphics[width=0.24\textwidth, trim=0 0 1.1cm 0.5cm, clip]{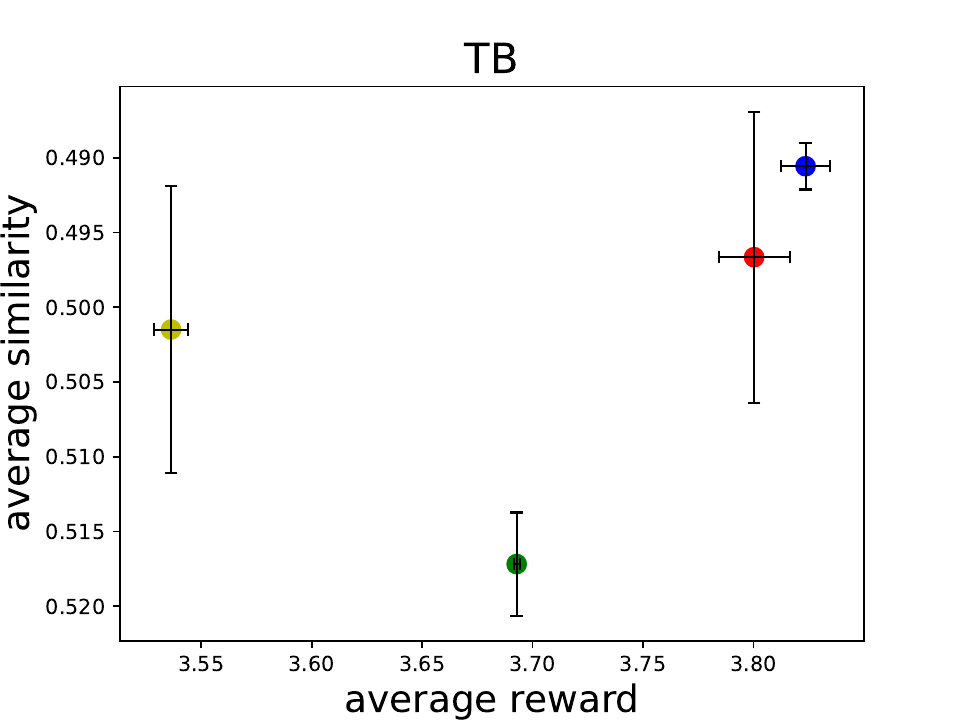}}
    {\includegraphics[width=0.24\textwidth, trim=0 0 1.1cm 0.5cm, clip]{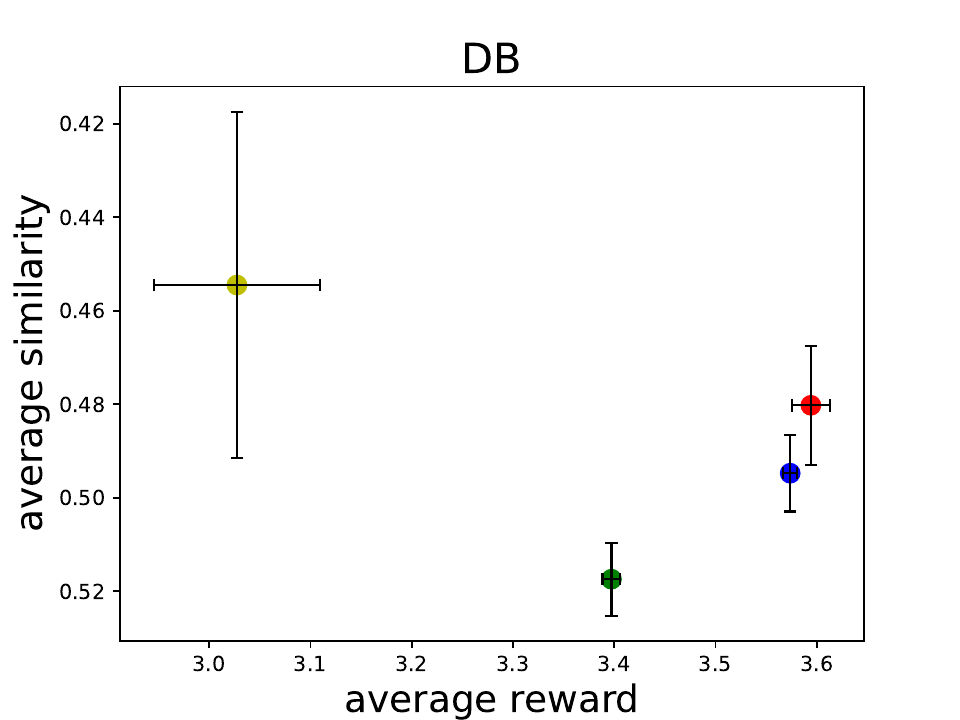}}
    {\includegraphics[width=0.24\textwidth, trim=0 0 1.1cm 0.5cm, clip]{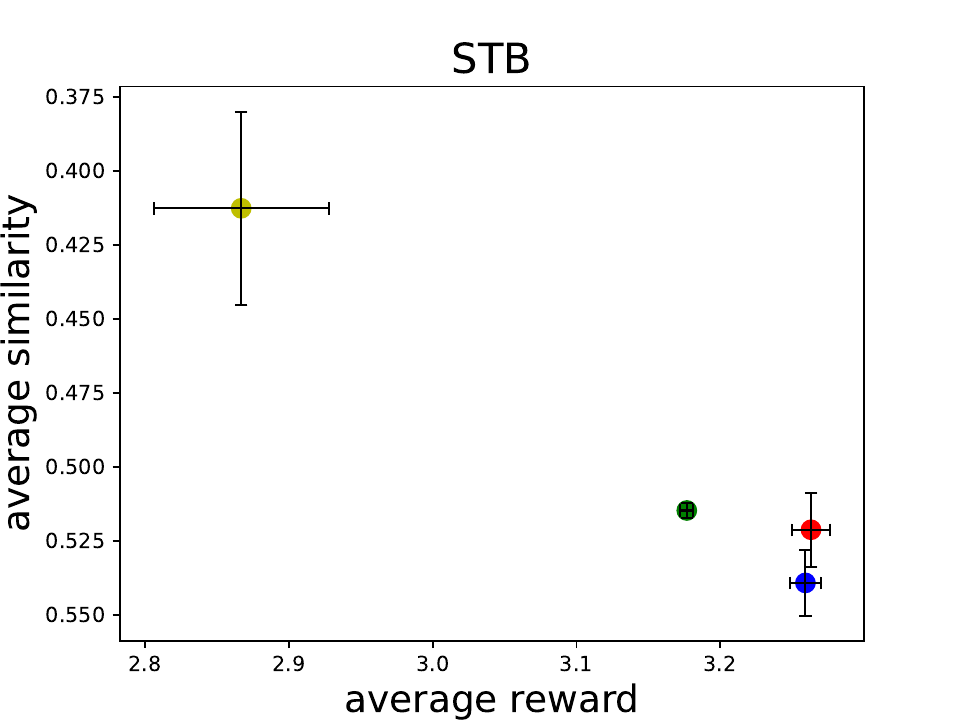}}
    {\includegraphics[width=0.24\textwidth, trim=0 0 1.1cm 0.5cm, clip]{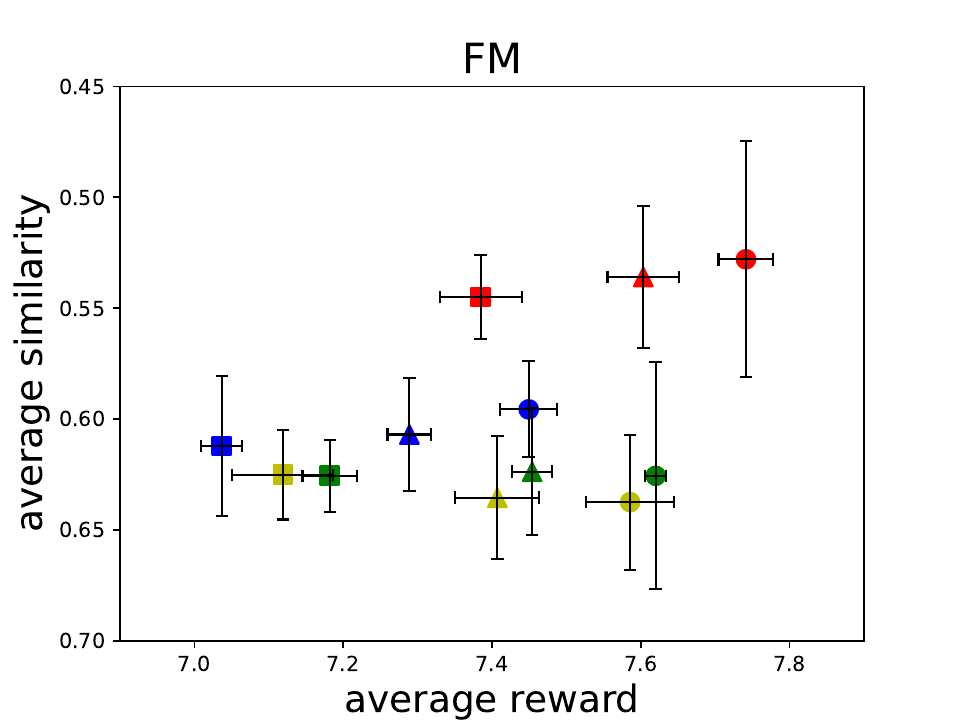}}
    {\includegraphics[width=0.24\textwidth, trim=0 0 1.1cm 0.5cm, clip]{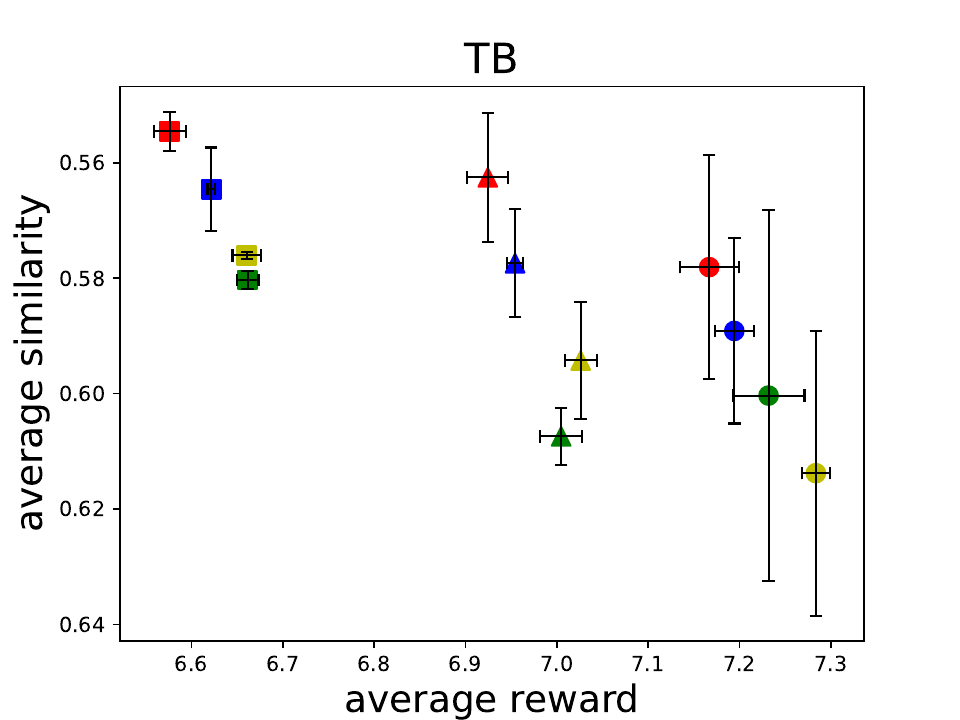}}
    {\includegraphics[width=0.24\textwidth, trim=0 0 1.1cm 0.5cm, clip]{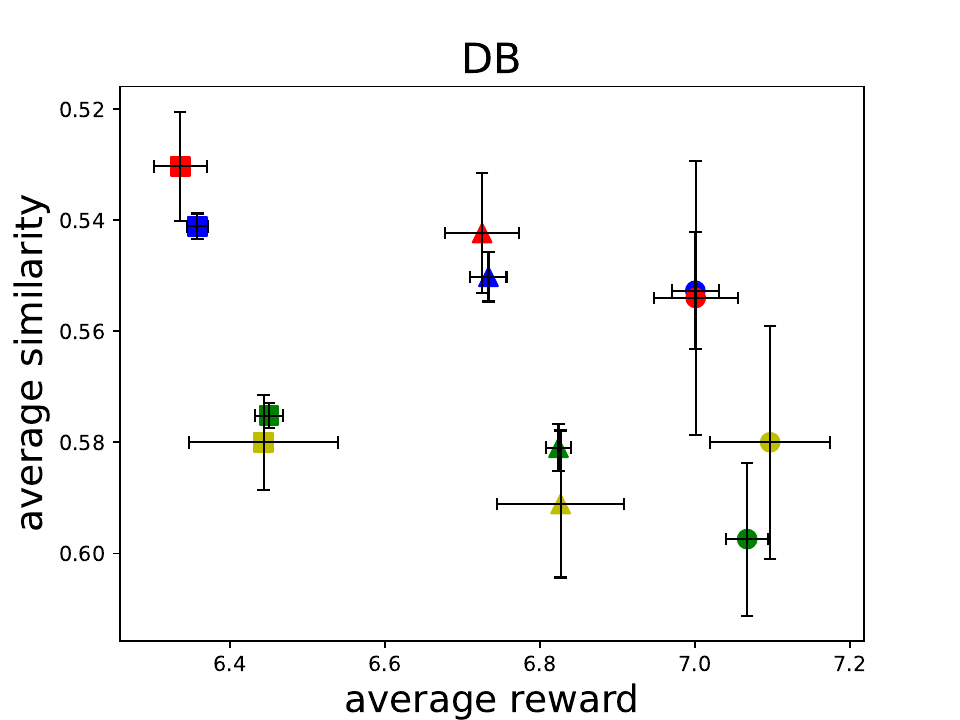}}
    {\includegraphics[width=0.24\textwidth, trim=0 0 1.1cm 0.5cm, clip]{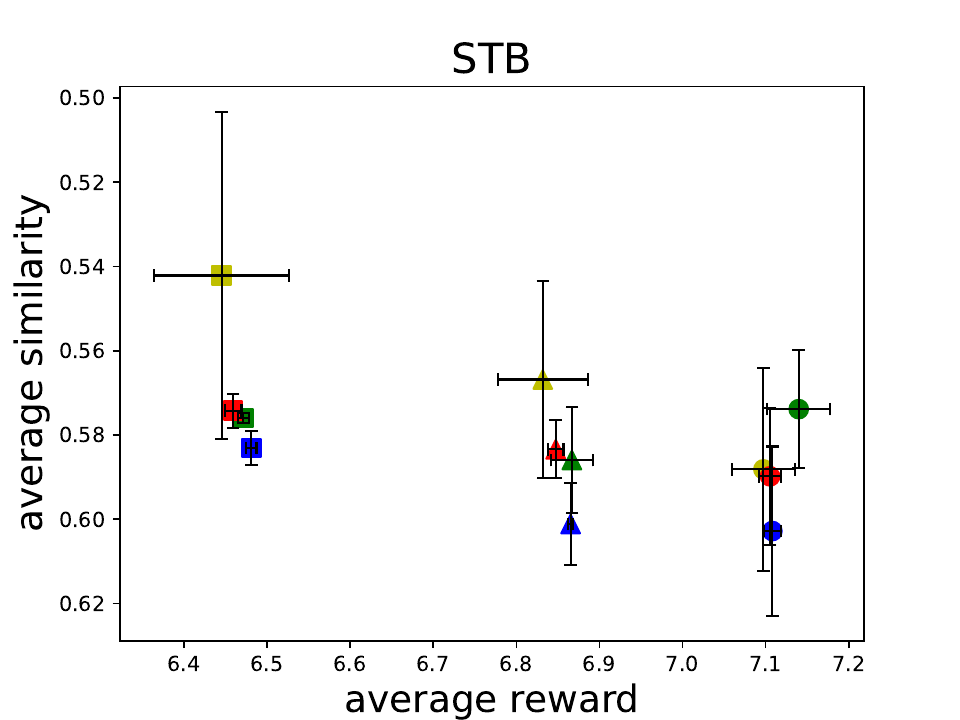}}
    {\includegraphics[width=0.6\textwidth, trim=1.2cm 10.3cm 6.4cm 7.5cm, clip]{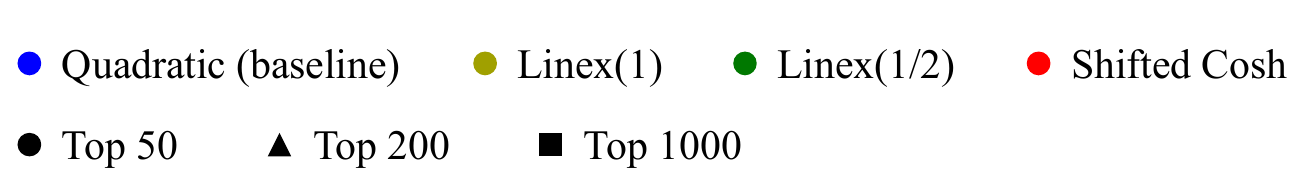}}
    \caption{Molecule generation results. Top: Average reward and pair-wise similarities of all $200k$ generated molecules during each training episode. The similarities are calculated among a randomly chosen subset of $1000$ molecules. Bottom: Average reward and pair-wise similarities of the top $k$ generated molecules during each training episode. }%\longbo{move some to the appendix?}}
    \label{figure:mol}
\end{figure}
% \todoyfz{Maybe a 3 * 2 plot is sufficient, we can enlarge the font in the plot.}

% \textbf{Experimental Setup.} [TO-BE-DONE]

The goal of this task is to generate binders of the sEH (soluble epoxide hydrolase) protein by sequentially joining `blocks' from a fixed library to the partial molecular graph (\cite{jin2018junction}). The reward function is given by a pretrained proxy model given by \cite{bengio2021flow}, and then adjusted by a reward exponent hyperparameter $\beta$, i.e., $R(x)=\tilde{R}(x)^\beta$ where $\tilde{R}(x)$ is the output of the proxy model. For DB, TB, and STB experiments, the backward policies are fixed to be uniform. The training objects are sampled from the $\epsilon$-noisy forward policy with a random action probability of $0.05$. Additional details can be found in Appendix \ref{appendix:molecule}.

It can be seen in Figure \ref{figure:mol} that zero-forcing objectives (Quadractic and shifted-Cosh) have a higher overall average reward, while zero-avoiding objectives (Linex$(1)$ and Linex$\left(1/2\right)$) have lower overall similarities, meaning that the samples are more diverse. However, things become different when it comes to the top $k$ molecules, but Linex$\left(1/2\right)$, which is neither zero-forcing nor zero-avoiding, demonstrates the best robustness among them. 

% \begin{table}
%     \centering
%     \begin{tabular}{cccccc}
%          \multicolumn{2}{c}{}&  Quadratic (baseline) &  Linex$(1)$ &  Linex$\left(1/2\right)$ & Shifted-Cosh 
% \\
%          \multicolumn{2}{c}{zero-forcing}&  \ding{51} &  \ding{55} &  \ding{55} & \ding{51} \\
%          \multicolumn{2}{c}{zero-avoiding}&  \ding{55}&  \ding{51}&  \ding{55} & \ding{51} \\
%          Convergence Speed&  # trajectories to find all modes in hyper-grid&  &  &  & \\
%          &  # trajectories to find all modes in bit-sequence generation&  &  &  & \\
%          Diversity&  Averge similarities between generated molecules&  &  &  & \\
%          Quality&  L1 error between $P_T$ and $P_R$ in hyper-grid&  &  &  & \\
%          &  Spearman correlation between $R$ and $P_T$ in bit-sequence generation&  &  &  & \\
%          &  Averge reward of generated molecules&  &  &  & \\
%          Robustness&  &  &  &  & \\
%     \end{tabular}
%     \caption{Caption}
%     \label{tab:my_label}
% \end{table}

% \begin{table}
%     \centering
%     \begin{tabular}{lcc}
%          &  \textbf{z-f}& \textbf{z-a}\\ \hline
%          Quadratic (baseline)&  \checkmark& \\
%          Linex$(1)$&  & \checkmark\\
%          Linex$\left(1/2\right)$&  & \\
%          Shifted-Cosh &  \checkmark& \checkmark\\
%     \end{tabular}
% \end{table}

\section{Conclusion}

In this work, we develop a principled and systematic approach for designing regression losses for efficient GFlowNets training. Specifically, we 
rigorously prove that distinct regression losses correspond to specific divergence measures, enabling us to design and analyze regression losses according to the desired properties of the corresponding divergence measures. 
Based on our theoretical framework, we designed three novel regression losses: Shifted-Cosh, Linex(1/2), and Linex(1). Through extensive evaluation across three benchmarks: hyper-grid, bit-sequence generation, and molecule generation, we show that our newly proposed losses are compatible with most existing training algorithms and significantly improve the performance of the algorithms in terms of convergence speed, sample diversity, and robustness.

% \clearpage
% \textbf{Ethics Statement.}
% Our research introduces novel loss functions for training Generative Flow Networks (GFlowNets) to enhance performance in generative modeling tasks. This work primarily involves algorithmic development and experiments on publicly available benchmarks and datasets, such as hyper-grid, bit-sequence generation, and molecule generation. In the context of molecule generation, we focus on designing molecules with potential therapeutic benefits from established public datasets.

% \textbf{Reproducibility Statement.}
% The proofs of our theoretical results can be found in Appendix~\ref{appendix:thm:main} and Appendix~\ref{appendix:thm:zfza}. Experimental setups, including model architectures, hyperparameters, and training procedures, are thoroughly described in Section~\ref{sec:experiments} and Appendix~\ref{appendix:exp}.

% \clearpage
%\vspace{10ex}
\bibliographystyle{iclr}
\bibliography{reference}

\clearpage
\appendix
%%%%%%%%%%%%%%%%%%%%%%%%%%%%%%%%%%%%%%%%%%%%%%%%%%%%%%%%%%%%

\newpage
\appendix
\hypersetup{colorlinks=true, linkcolor=black, citecolor=citecolor,urlcolor=black}

\renewcommand{\appendixpagename}{\centering \LARGE Appendix}
\appendixpage
%\begingroup % start a TeX group
%\color{blue}% or whatever color you wish to use

\startcontents[section]
\printcontents[section]{l}{1}{\setcounter{tocdepth}{2}}
%\endgroup   % end of TeX group
% \onecolumn
\vspace{20ex}

\hypersetup{linkcolor=linkcolor, citecolor=citecolor,urlcolor=black}
\newpage

\section{Unifying Training Algorithms of GFlowNets}

\label{appendix:algo}

An objective function for training GFlowNets is specified by five key components, the training objects $\mathcal{O}$, the parameterization mapping $\hat{p}_\theta$, the sampling and resampling weights $\mu$, the backward policy $P_B$ and the regression loss $g$. Most existing algorithms specify only one to two of the former four components. 

\subsection{Training Objects and Parameterization Mapping}

The choice of these two components are usually coupled since the parameters are mapped to the flow functions defined on training objects. The choice of training objects include states, edges, partial trajectories and complete trajectories, corresponding to Flow-Matching GFlownets (FM-GFN, \citealt{bengio2021flow}), Detailed-Balance GFlowNets (DB-GFN, \citealt{bengio2023gflownet}), Sub-Trajectory-Balance GFlowNets (STB-GFN, \citealt{madan2023learning}) and Trajectory-Balance GFlowNets (TB-GFN, \citealt{malkin2022trajectory}), respectively. Detailed-Balance GFlowNets and Sub-Trajectory-Balance GFlowNets can be parameterized in different ways, the variants of which are Forward-Looking GFlowNets (FL-GFN, \citealt{pan2023better}) and DAG GFlowNets (DAG-GFN, also called modified-DB or modified-STB, \citealt{deleu2022bayesian, hu2023amortizing}). These algorithms can be summarized in Table \ref{table:algo}.

\begin{table}[htb]
\caption{The training objects and parameterization mappings of different GFlowNet training algorithms. Among the parameters, $\hat{P}_B$ can be either fixed or learned.}
\resizebox{\linewidth}{!}{
\begin{tabular}{cccc}
\toprule
\textbf{Algorithm}    & \textbf{Training Objects}      & \textbf{Parameters}            & \textbf{Parameterization mapping}  \\ \midrule
FM           & states                & $\hat{F}(s\to s')$                               & Equation (\ref{fm_pf}), (\ref{fm_pb}) \\ \addlinespace
DB           & transitions           & $\hat{F}(s), \hat{P}_F(s'|s)\,(, \hat{P}_B(s|s'))$   & Equation~(\ref{db_pf}), (\ref{db_pb}) \\
FL-DB        & transitions           & $\tilde{F}(s), \hat{P}_F(s'|s)\,(, \hat{P}_B(s|s'))$ & Equation~(\ref{fldb_pf}), (\ref{fldb_pb}) \\
modified-DB  & transitions           & $\hat{P}_F(s'|s)\,(, \hat{P}_B(s|s'))$               & Equation (\ref{mdb_pf}), (\ref{mdb_pb}) \\ \addlinespace
TB           & complete trajectories & $\hat{Z}, \hat{P}_F(s'|s)\,(, \hat{P}_B(s|s'))$      & Equation (\ref{tb_pf}), (\ref{tb_pb}) \\ \addlinespace
STB          & partial trajectories  & $\hat{F}(s), \hat{P}_F(s'|s)\,(, \hat{P}_B(s|s'))$   & Equation (\ref{stb_pf}), (\ref{stb_pb}) \\
FL-STB       & partial trajectories  & $\tilde{F}(s), \hat{P}_F(s'|s)\,(, \hat{P}_B(s|s'))$ & Equation (\ref{flstb_pf}), (\ref{flstb_pb}) \\
modified-STB & partial trajectories  & $\hat{P}_F(s'|s)\,(, \hat{P}_B(s|s'))$               & Equation (\ref{mstb_pf}), (\ref{mstb_pb}) \\
\hline
\end{tabular}
}
\label{table:algo}
\end{table}

\paragraph{Flow-Matching GFlowNets (FM-GFN).}
An FM-GFN is parameterized by an edge-flow function $\hat{F}:E\to\mathbb{R}_+$. It uniquely determines a valid flow network if and only if the \textbf{flow-matching conditions} hold:
\begin{align*}
    &\forall s\in V\setminus\{s_o,s_f\}, \sum_{(s'\to s)\in E}\hat{F}(s'\to s)=R(s)+\sum_{\substack{(s\to s'')\in E\\s''\neq s_f}}\hat{F}(s\to s'')
\end{align*}

The \textbf{flow-matching loss} for state $s$ is defined as
\begin{align}
    L_{FM}(s)=&\frac{1}{2}\left(\log\frac{\hat{p}_B(s)}{\hat{p}_F(s)}\right)^2\notag\\
    \text{where }\hat{p}_F(s)=&\sum_{(s'\to s)\in E}\hat{F}(s'\to s)\label{fm_pf}\\
    \hat{p}_B(s)=&R(s)+\sum_{\substack{(s\to s'')\in E\\s''\neq s_f}}\hat{F}(s\to s'')\label{fm_pb}
\end{align}

\paragraph{Detailed-Balance GFlowNets (DB-GFN).}
A DB-GFN is parameterized by a state-flow function $\hat{F}:V\setminus\{s_f\}\to\mathbb{R}_+$, a forward probability function $\hat{P}_F:V\setminus\{s_f\}\to\Delta(V)$ and a backward probability function $\hat{P}_B:V\setminus\{s_0,s_f\}\to\Delta(V)$. They uniquely determine a valid flow network if and only if the \textbf{detailed balance conditions} hold:
\begin{align*}
    &\forall s\in S^f, \hat{F}(s)\hat{P}_F(s_f|s)=R(s)\\
    &\forall (s\to s')\in E, s'\neq s_f, \hat{F}(s)\hat{P}_F(s'|s)=\hat{F}(s')\hat{P}_B(s|s')
\end{align*}

The \textbf{detailed-balance loss} for transition $s\to s'$ is defined as
\begin{align}
    L_{DB}(s\to s')=&\frac{1}{2}\left(\log\frac{\hat{p}_B(s\to s')}{\hat{p}_F(s\to s')}\right)^2\notag\\
    \text{where }\hat{p}_F(s\to s')=&\hat{F}(s)\hat{P}_F(s'|s)\label{db_pf}\\
    \hat{p}_B(s\to s')=&\begin{cases}
        \hat{F}(s')\hat{P}_B(s|s') &, s'\neq s_f\\
        R(s) &, s' = s_f
    \end{cases}\label{db_pb}
\end{align}

\paragraph{Trajectory-Balance GFlowNets (TB-GFN).}
A TB-GFN is parameterized by a total flow function $\hat{Z}$, a forward probability function \(\hat{P}_F:V\setminus\{s_f\}\to\Delta(V\setminus\{s_0\})\) and a backward probability function $\hat{P}_B:V\setminus\{s_0\}\to\Delta(V\setminus\{s_f\})$. They uniquely determine a GFlowNet if and only if the \textbf{trajectory balance conditions} hold:
\begin{align*}
    \forall \tau=(s_0=s_o,s_1,\cdots,s_{T-1},s_T=s_f), &
    \ \hat{Z}\prod_{t=0}^{T=1}\hat{P}_F(s_{t+1}|s_t)=R(s_{T-1})\prod_{t=1}^{T-1}\hat{P}_B(s_{t-1}|s_t)
\end{align*}
The \textbf{trajectory-balance loss} for complete trajectory $\tau=(s_0=s_o,s_1,\cdots,s_{T-1},s_T=s_f)$ is defined as
\begin{align}
    L_{TB}(\tau)=&\frac{1}{2}\left(\log\frac{\hat{p}_B(\tau)}{\hat{p}_F(\tau)}\right)^2\notag\\
    \text{where }\hat{p}_F(\tau)=& \hat{Z}\prod_{t=0}^{T=1}\hat{P}_F(s_{t+1}|s_t)\label{tb_pf}\\
    \hat{p}_B(\tau)=&R(s_{T-1})\prod_{t=1}^{T-1}\hat{P}_B(s_{t-1}|s_t)\label{tb_pb}
\end{align}

\paragraph{Sub-Trajectory-Balance GFlowNets (STB-GFN).}
An STB-GFN uses the same parameters as a DB-GFN with an alternative loss, the \textbf{sub-trajectory-balance loss}. It is defined for partial trajectory $\iota=(s_0, s_1, \cdots, s_{T-1}, s_T)$ as
\begin{align}
    L_{STB}(\iota)=&\frac{1}{2}\left(\log\frac{\hat{p}_B(\iota)}{\hat{p}_F(\iota)}\right)^2\notag\\
    \text{where }\hat{p}_F(\iota)=& \hat{F}(s_0)\prod_{t=0}^{T=1}\hat{P}_F(s_{t+1}|s_t)\label{stb_pf}\\
    \hat{p}_B(\iota)=&\begin{cases}
        \hat{F}(s_T)\prod_{t=1}^{T}\hat{P}_B(s_{t-1}|s_t) &, s_T\neq s_f\\
        R(s_{T-1})\prod_{t=1}^{T-1}\hat{P}_B(s_{t-1}|s_t) &, s_T = s_f
    \end{cases}\label{stb_pb}
    % &\forall s\in S^f, \hat{F}(s)=R(s)\\
    % &\forall \iota=(s_0, s_1, \cdots, s_{T-1}, s_T), \hat{F}(s_0)\prod_{t=0}^{T=1}\hat{P}_F(s_{t+1}|s_t)=\hat{F}(s_T)\prod_{t=1}^{T}\hat{P}_B(s_{t-1}|s_t)
\end{align}

\paragraph{Forward-looking GFlowNets (FL-GFN).}

FL-GFNs require the assumption that the reward function can be extended to the whole state space, instead of restricted to only terminal states. The parameters of FL-GFN are quite similar to that of the original DB GFlowNets and STB GFlowNets, including a forward-looking state-flow function $\tilde{F}:V\setminus\{s_f\}\to\mathbb{R}_+$, a forward probability function $\hat{P}_F:V\setminus\{s_f\}\to\Delta(V)$ and a backward probability function $\hat{P}_B:V\setminus\{s_0,s_f\}\to\Delta(V)$. The only difference is that the original state-flow function $\hat{F}$ is replaced by the forward-looking version $\tilde{F}$, following $\hat{F}(s)=R(s)\tilde{F}(s)$. The \textbf{forward-looking detailed-balance loss} and \textbf{forward-looking sub-trajectory-balance loss} can be obtained by substituting them with the original ones:
\begin{align}
    L_{\text{FL-DB}}(s\to s')=&\frac{1}{2}\left(\log\frac{\hat{p}_B(s\to s')}{\hat{p}_F(s\to s')}\right)^2\notag\\
    \text{where }\hat{p}_F(s\to s')=&R(s)\tilde{F}(s)\hat{P}_F(s'|s)\label{fldb_pf}\\
    \hat{p}_B(s\to s')=&\begin{cases}
        R(s')\tilde{F}(s')\hat{P}_B(s|s'), & s'\neq s_f\\
        R(s) ,& s' = s_f
    \end{cases}\label{fldb_pb}\\
    L_{\text{FL-STB}}(\iota)=&\frac{1}{2}\left(\log\frac{\hat{p}_B(\iota)}{\hat{p}_F(\iota)}\right)^2\notag\\
    \text{where }\hat{p}_F(\iota)=& R(s_0)\tilde{F}(s_0)\prod_{t=0}^{T=1}\hat{P}_F(s_{t+1}|s_t)\label{flstb_pf}\\
    \hat{p}_B(\iota)=&\begin{cases}
        R(s_T)\tilde{F}(s_T)\prod_{t=1}^{T}\hat{P}_B(s_{t-1}|s_t), & s_T\neq s_f\\
        R(s_{T-1})\prod_{t=1}^{T-1}\hat{P}_B(s_{t-1}|s_t), & s_T = s_f
    \end{cases}\label{flstb_pb}
\end{align}

\paragraph{DAG GFlowNets (DAG-GFN).}

DAG-GFNs require that each state is terminated and has a non-zero reward. Then according to the detailed-balance condition, $\hat{F}(s)=\frac{R(s)}{\hat{p}_F(s_f|s)}$ for all $s$. Therefore, the flow network can be parameterized by only the forward probability function $\hat{P}_F:V\setminus\{s_f\}\to\Delta(V)$ and the backward probability function $\hat{P}_B:V\setminus\{s_0,s_f\}\to\Delta(V)$. The \textbf{modified detailed-balance loss} and \textbf{modified sub-trajectory-balance loss} can be obtained by substituting them into the original ones:
\begin{align}
    L_{\text{modified-DB}}(s\to s')=&\frac{1}{2}\left(\log\frac{\hat{p}_B(s\to s')}{\hat{p}_F(s\to s')}\right)^2\notag\\
    \text{where }\hat{p}_F(s\to s')=&\frac{R(s)\hat{P}_F(s'|s)}{\hat{P}_F(s_f|s)}\label{mdb_pf}\\
    \hat{p}_B(s\to s')=&\begin{cases}
        \frac{R(s')\hat{P}_B(s|s')}{\hat{P}_F(s_f|s')} &, s'\neq s_f\\
        R(s) &, s' = s_f
    \end{cases}\label{mdb_pb}\\
    L_{\text{modified-STB}}(\iota)=&\frac{1}{2}\left(\log\frac{\hat{p}_B(\iota)}{\hat{p}_F(\iota)}\right)^2\notag\\
    \text{where }\hat{p}_F(\iota)=& \frac{R(s_0)}{\hat{P}_F(s_f|s_0)}\prod_{t=0}^{T=1}\hat{P}_F(s_{t+1}|s_t)\label{mstb_pf}\\
    \hat{p}_B(\iota)=&\begin{cases}
        \frac{R(s_T)}{\hat{P}_F(s_f|s_T)}\prod_{t=1}^{T}\hat{P}_B(s_{t-1}|s_t) &, s_T\neq s_f\\
        R(s_{T-1})\prod_{t=1}^{T-1}\hat{P}_B(s_{t-1}|s_t) &, s_T = s_f
    \end{cases}\label{mstb_pb}
\end{align}

\subsection{Sampling and Resampling Weights}

There exist various strategies to sample training objects to enhance exploration and hence accelerate convergence.
The usual practice is to use the forward policy, the backward policy, a tempered or $\epsilon$-noisy version of them, an offline dataset, or a mixture of these strategies.
Other choices include using a reward prioritized replay buffer~\citep{shen2023towards}, applying Thompson sampling~\citep{rector2023thompson}, local search~\citep{kim2023local} or genetic search~\citep{kim2024genetic} to the sampled trajectories for extra samples, increasing greediness according to state-action value $Q$~\citep{lau2024qgfn}, etc. The sampled objects may also be reweighed. For example, STB-GFN weights each partial trajectory by a factor proportional to $\lambda^l$, where $l$ is its length and $\lambda$ is a hyper-parameter.

\subsection{Backward Policy}

The most common choice of $P_B$ is to either fix it to be uniform or simultaneously train it using the same objective as the forward policy. Other criteria include matching a (possibly non-Markovian) prior~\citep{shen2023towards}, maximizing the entropy of the corresponding forward policy~\citep{mohammadpour2024maximum} and learning a pessimistic one that focuses on observed trajectories~\citep{jang2024pessimistic}.

\section{Theorem \ref{thm:main-theorem} and its Proof}
\label{appendix:thm:main}
\begin{theorem}[An extension of Theorem \ref{thm:main-theorem}]
    \label{thm:main-theorem-appendix}
    Let $\theta_F$ and $\theta_B$ be the parameters for forward and backward policies, respectively. For each minimal cut $C\in\mathcal{C}$, the restrictions of both forward and backward flow functions on $C$ can be viewed as unnormalized distributions over it, denoted as $\hat{p}^C_F$ and $\hat{p}^C_B$, respectively. 
    
    If there exists $w:\mathcal{C}\to\mathbb{R}_+$ such that $\mu(o)=\hat{p}_F(o)\sum_{C\in\mathcal{C}, o\in C}w(C)$ for any $o\in\mathcal{O}$, then
    \begin{align*}
        &\nabla_{\theta_F}\mathcal{L}_{\mathcal{O},\hat{p}_\theta,\mu,P_B,g}=\nabla_{\theta_F}\sum_{C\in\mathcal{C}}w(C)D_{f_1}(\hat{p}^C_B||\hat{p}^C_F),\text{where } f_1(t)=t\int_1^t\frac{g'(\log s)}{s^2}ds\\
        &\nabla_{\theta_B}\mathcal{L}_{\mathcal{O},\hat{p}_\theta,\mu,P_B,g}=\nabla_{\theta_B}\sum_{C\in\mathcal{C}}w(C)D_{f_2}(\hat{p}^C_B||\hat{p}^C_F),\text{where } f_2(t)=g(\log t)
    \end{align*}
    If there exists $w:\mathcal{C}\to\mathbb{R}_+$ such that $\mu(o)=\hat{p}_B(o)\sum_{C\in\mathcal{C}, o\in C}w(C)$ for any $o\in\mathcal{O}$, then
    \begin{align*}
        &\nabla_{\theta_F}\mathcal{L}_{\mathcal{O},\hat{p}_\theta,\mu,P_B,g}=\nabla_{\theta_F}\sum_{C\in\mathcal{C}}w(C)D_{f_3}(\hat{p}^C_B||\hat{p}^C_F),\text{where } f_3(t)=tg(\log t)\\
        &\nabla_{\theta_B}\mathcal{L}_{\mathcal{O},\hat{p}_\theta,\mu,P_B,g}=\nabla_{\theta_B}\sum_{C\in\mathcal{C}}w(C)D_{f_4}(\hat{p}^C_B||\hat{p}^C_F),\text{where } f_4(t)=\int_1^tg'(\log s)ds
    \end{align*}
\end{theorem}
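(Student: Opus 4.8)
The plan is to establish all four identities by a single direct gradient computation, matching the gradient of $\mathcal{L}$ term-by-term against the gradient of the weighted $f$-divergence sum. I would treat the forward-parameter cases ($f_1,f_3$) and the backward-parameter cases ($f_2,f_4$) in parallel, since they differ only in which flow carries the parameter dependence. The starting point is the structural fact that $\hat{p}_F(o;\theta)$ depends only on $\theta_F$ while $\hat{p}_B(o;\theta)$ depends only on $\theta_B$. Writing $r(o)=\log\frac{\hat{p}_B(o)}{\hat{p}_F(o)}$ and applying the chain rule, together with $\nabla_{\theta_F}\log\hat{p}_B=0$ and $\nabla_{\theta_B}\log\hat{p}_F=0$, gives
\begin{align*}
\nabla_{\theta_F}\mathcal{L} = -\sum_{o\in\mathcal{O}}\mu(o)\,g'(r(o))\,\frac{\nabla_{\theta_F}\hat{p}_F(o)}{\hat{p}_F(o)}, \qquad \nabla_{\theta_B}\mathcal{L} = \sum_{o\in\mathcal{O}}\mu(o)\,g'(r(o))\,\frac{\nabla_{\theta_B}\hat{p}_B(o)}{\hat{p}_B(o)}.
\end{align*}

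Next I would substitute the weight hypothesis and exchange the order of summation. Under $\mu(o)=\hat{p}_F(o)\sum_{C\ni o}w(C)$, the factor $\hat{p}_F(o)$ cancels the $1/\hat{p}_F(o)$ in the $\theta_F$ gradient and survives as a weight in the $\theta_B$ gradient; using that the restriction obeys $\hat{p}^C_F(o)=\hat{p}_F(o)$ and $\hat{p}^C_B(o)=\hat{p}_B(o)$ for every $o\in C$, a Fubini-type exchange regroups the object-sum into a sum over cuts, e.g.
\begin{align*}
\nabla_{\theta_F}\mathcal{L} = -\sum_{C\in\mathcal{C}}w(C)\sum_{o\in C}g'(r(o))\,\nabla_{\theta_F}\hat{p}^C_F(o).
\end{align*}
The analogous regrouping covers all four cases; under the second hypothesis the weight $\hat{p}_B(o)$ instead leaves a residual factor $\hat{p}_B(o)/\hat{p}_F(o)=t_o$ in the $\theta_F$ gradient, which is exactly the source of the extra $t$ appearing in $f_3$.

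Then I would differentiate the right-hand side directly, working with the unnormalized form $D_f(\hat{p}^C_B\|\hat{p}^C_F)=\sum_{o\in C}\hat{p}^C_F(o)\,f\!\left(\hat{p}^C_B(o)/\hat{p}^C_F(o)\right)$, so that any change in total flow is absorbed automatically. Abbreviating $q=\hat{p}^C_F(o)$, $p=\hat{p}^C_B(o)$, $t=p/q$, the product and chain rules yield the two key identities $\nabla_{\theta_F}[q f(p/q)]=[f(t)-tf'(t)]\nabla_{\theta_F}q$ and $\nabla_{\theta_B}[q f(p/q)]=f'(t)\nabla_{\theta_B}p$. Comparing with the cut-wise gradients of $\mathcal{L}$ reduces each of the four claims to a one-line check: $f_1(t)-tf_1'(t)=-g'(\log t)$, $f_2'(t)=g'(\log t)/t$, $f_3(t)-tf_3'(t)=-tg'(\log t)$, and $f_4'(t)=g'(\log t)$, each verified by substituting the stated closed forms, for instance by differentiating $f_1(t)=t\int_1^t s^{-2}g'(\log s)\,ds$.

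I expect the main obstacle to be the bookkeeping in the summation swap rather than any hard analysis: one must confirm that the weight hypothesis is precisely what makes the per-object coefficients collapse into cut-wise $f$-divergence summands, and that an object lying in several minimal cuts contributes the same flow value in each, so that $r(o)$ and $t_o$ are unambiguous across cuts. The differentiation identities are routine, but recognizing that the forward-parameter direction produces the combination $f(t)-tf'(t)$ — forcing an \emph{integral} rather than algebraic solution for $f_1$ and $f_3$, whereas the backward-parameter direction only involves $f'(t)$ — is the conceptual crux distinguishing the two parameter blocks.
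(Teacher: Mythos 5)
Your proposal is correct and takes essentially the same route as the paper's own proof: both rest on the product/chain-rule identities $\nabla_{\theta_F}\left[q\,f(p/q)\right]=\left[f(t)-tf'(t)\right]\nabla_{\theta_F}q$ and $\nabla_{\theta_B}\left[q\,f(p/q)\right]=f'(t)\,\nabla_{\theta_B}p$, the weight hypothesis to exchange object-sums for cut-sums, and the same four one-line verifications (e.g.\ $f_1(t)-tf_1'(t)=-g'(\log t)$); the paper merely runs the computation from the divergence side toward $\mathcal{L}$, whereas you run it from $\mathcal{L}$ toward the divergence, which is the same argument read in the opposite direction.
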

\begin{proof}
We prove the theorem by deriving the correspondence. Specifically, assume  $\mu(o)=\hat{p}_F(o)\sum_{C\in\mathcal{C}, o\in C}w(C)$. Then, 
%    The theorem is proved by direct calculation.
    \begin{align*}
        % \nabla_\ D_f(p_\theta||q_\varphi)
        % =&\nabla_\varphi\left(\sum_{x\in\mathcal{X}}q(x)f\left(\frac{p(x)}{q(x)}\right)\right)\\
        % =&\sum_{x\in\mathcal{X}}\left[f\left(\frac{p(x)}{q(x)}\right)-\frac{p(x)}{q(x)}f'\left(\frac{p(x)}{q(x)}\right)\right]\nabla_\varphi q(x)\\
        % \nabla_p D_f(p||q)
        % =&\nabla_\theta\left(\sum_{x\in\mathcal{X}}q(x)f\left(\frac{p(x)}{q(x)}\right)\right)\\
        % =&f'\left(\frac{p(x)}{q(x)}\right)\\
        % =&\sum_{C\in\mathcal{C}}w(C)f_1'\left(\frac{\hat{p}^C_B}{\hat{p}^C_F}\right)
        %=&\sum_{x\in\mathcal{X}}\left(f\left(\frac{p_\theta(x)}{q_\varphi(x)}\right)-\frac{p_\theta(x)}{q_\varphi(x)}f'\left(\frac{p_\theta(x)}{q_\varphi(x)}\right)-b_q\right)\nabla_\varphi q_\varphi(x)
        %=&\mathbb{E}_{x\sim q_\varphi(\cdot)}\left[\left(f\left(\frac{p_\theta(x)}{q_\varphi(x)}\right)-\frac{p_\theta(x)}{q_\varphi(x)}f'\left(\frac{p_\theta(x)}{q_\varphi(x)}\right)-b_q\right)\nabla_\varphi\log q_\varphi(x)\right]\\
        %=&\mathbb{E}_{x\sim p_\theta(\cdot)}\left[\left(\frac{q_\varphi(x)}{p_\theta(x)}\left(f\left(\frac{p_\theta(x)}{q_\varphi(x)}\right)-b_q\right)-f'\left(\frac{p_\theta(x)}{q_\varphi(x)}\right)\right)\nabla_\varphi\log q_\varphi(x)\right]
        \nabla_{\theta_F}\sum_{C\in\mathcal{C}}w(C)D_{f_1}(\hat{p}^C_B||\hat{p}^C_F)
        =&\sum_{C\in\mathcal{C}}w(C)\sum_{o\in C}\nabla_{\theta_F}\left[\hat{p}^C_F(o)f_1\left(\frac{\hat{p}^C_B(o)}{\hat{p}^C_F(o)}\right)\right]\\
        =&\sum_{C\in\mathcal{C}}w(C)\sum_{o\in C}\left[f_1\left(\frac{\hat{p}^C_B(o)}{\hat{p}^C_F(o)}\right)-\frac{\hat{p}^C_B(o)}{\hat{p}^C_F(o)}f_1'\left(\frac{\hat{p}^C_B(o)}{\hat{p}^C_F(o)}\right)\right]\nabla_{\theta_F}\hat{p}^C_F(o)\\
        =&\sum_{C\in\mathcal{C}}w(C)\sum_{o\in C}-g'\left(\log\frac{\hat{p}^C_B(o)}{\hat{p}^C_F(o)}\right)\nabla_{\theta_F}\hat{p}^C_F(o)\\
        =&\sum_{C\in\mathcal{C}}w(C)\sum_{o\in C}\hat{p}^C_F(o)g'\left(\log\frac{\hat{p}^C_B(o)}{\hat{p}^C_F(o)}\right)\left(-\frac{1}{\hat{p}^C_F(o)}\right)\nabla_{\theta_F}\hat{p}^C_F(o)\\
        =&\sum_{o\in\mathcal{O}}\mu(o)\nabla_{\theta_F}g\left(\log\frac{\hat{p}^C_B(o)}{\hat{p}^C_F(o)}\right)\\
        =&\nabla_{\theta_F}\mathcal{L}_{\mathcal{O},\hat{p}_\theta,\mu,P_B,g}\\
        \nabla_{\theta_B}\sum_{C\in\mathcal{C}}w(C)D_{f_2}(\hat{p}^C_B||\hat{p}^C_F)
        =&\sum_{C\in\mathcal{C}}w(C)\sum_{o\in C}\nabla_{\theta_B}\left[\hat{p}^C_F(o)f_2\left(\frac{\hat{p}^C_B(o)}{\hat{p}^C_F(o)}\right)\right]\\
        =&\sum_{C\in\mathcal{C}}w(C)\sum_{o\in C}\left[\hat{p}^C_F(o)\nabla_{\theta_B}g\left(\log\frac{\hat{p}^C_B(o)}{\hat{p}^C_F(o)}\right)\right]\\
        =&\sum_{o\in\mathcal{O}}\mu(o)\nabla_{\theta_B}g\left(\log\frac{\hat{p}^C_B(o)}{\hat{p}^C_F(o)}\right)\\
        =&\nabla_{\theta_B}\mathcal{L}_{\mathcal{O},\hat{p}_\theta,\mu,P_B,g}
    \end{align*}
In the second case, suppose $\mu(o)=\hat{p}_B(o)\sum_{C\in\mathcal{C}, o\in C}w(C)$. Then, 
    
    \begin{align*}
 \nabla_{\theta_F}\sum_{C\in\mathcal{C}}w(C)D_{f_3}(\hat{p}^C_B||\hat{p}^C_F)
        =&\sum_{C\in\mathcal{C}}w(C)\sum_{o\in C}\nabla_{\theta_F}\left[\hat{p}^C_F(o)f_3\left(\frac{\hat{p}^C_B(o)}{\hat{p}^C_F(o)}\right)\right]\\
        =&\sum_{C\in\mathcal{C}}w(C)\sum_{o\in C}\left[f_3\left(\frac{\hat{p}^C_B(o)}{\hat{p}^C_F(o)}\right)-\frac{\hat{p}^C_B(o)}{\hat{p}^C_F(o)}f_3'\left(\frac{\hat{p}^C_B(o)}{\hat{p}^C_F(o)}\right)\right]\nabla_{\theta_F}\hat{p}^C_F(o)\\
        =&\sum_{C\in\mathcal{C}}w(C)\sum_{o\in C}-\frac{\hat{p}^C_B(o)}{\hat{p}^C_F(o)}g'\left(\log\frac{\hat{p}^C_B(o)}{\hat{p}^C_F(o)}\right)\nabla_{\theta_F}\hat{p}^C_F(o)\\
        =&\sum_{C\in\mathcal{C}}w(C)\sum_{o\in C}\hat{p}^C_B(o)g'\left(\log\frac{\hat{p}^C_B(o)}{\hat{p}^C_F(o)}\right)\left(-\frac{1}{\hat{p}^C_F(o)}\right)\nabla_{\theta_F}\hat{p}^C_F(o)\\
        =&\sum_{o\in\mathcal{O}}\mu(o)\nabla_{\theta_F}g\left(\log\frac{\hat{p}^C_B(o)}{\hat{p}^C_F(o)}\right)\\
        =&\nabla_{\theta_F}\mathcal{L}_{\mathcal{O},\hat{p}_\theta,\mu,P_B,g}\\
        \nabla_{\theta_B}\sum_{C\in\mathcal{C}}w(C)D_{f_4}(\hat{p}^C_B||\hat{p}^C_F)
        =&\sum_{C\in\mathcal{C}}w(C)\sum_{o\in C}\nabla_{\theta_B}\left[\hat{p}^C_F(o)f_4\left(\frac{\hat{p}^C_B(o)}{\hat{p}^C_F(o)}\right)\right]\\
        =&\sum_{C\in\mathcal{C}}w(C)\sum_{o\in C}f_4'\left(\frac{\hat{p}^C_B(o)}{\hat{p}^C_F(o)}\right)\nabla_{\theta_B}\hat{p}^C_B(o)\\
        =&\sum_{C\in\mathcal{C}}w(C)\sum_{o\in C}\hat{p}^C_B(o)g'\left(\log\frac{\hat{p}^C_B(o)}{\hat{p}^C_F(o)}\right)\frac{1}{\hat{p}^C_B(o)}\nabla_{\theta_B}\hat{p}^C_B(o)\\
        =&\sum_{o\in\mathcal{O}}\mu(o)\nabla_{\theta_B}g\left(\log\frac{\hat{p}^C_B(o)}{\hat{p}^C_F(o)}\right)\\
        =&\nabla_{\theta_B}\mathcal{L}_{\mathcal{O},\hat{p}_\theta,\mu,P_B,g}
    \end{align*}
\end{proof}

\section{Interpretation of Theorem \ref{thm:main-theorem} for Different Kinds of Losses}

\label{remark}

\subsection{Flow Matching Loss}

For any $s\in V$, let $l(s)$ be the length of the longest trajectory from $s_o$ to $s$. For any $(s\to s')\in E$, if $l(s)+1<l(s')$, then we insert $l(s')-l(s)-1$ virtual states on this edge, denoted as $s_{(s\to s'),l}$ for $l(s)<l<l(s')$, and define
\begin{align*}
    \hat{p}_F(s_{(s\to s'),l})=\hat{p}_B(s_{(s\to s'),l})=\hat{F}(s\to s')
\end{align*}
then these virtual states have no contribution to the total loss, thus we can assign to them arbitrary weights. 

Let $V^i$ be the collections of states in layer $i$, and let $w(V^i)=1$, then we have
\begin{align*}
    \mu(s)=\hat{p}_F^{V^{l(s)}}(s)=\hat{p}_F(s)
\end{align*}

\subsection{Detailed Balance Loss}

For any $s\in V$, let $l(s)$ be the length of the longest trajectory from $s_o$ to $s$. For any $(s\to s')\in E$, if $l(s)+1<l(s')$, then we insert $l(s')-l(s)-1$ virtual states on this edge, denoted as $s_{(s\to s'),l}$ for $l(s)<l<l(s')$, and define
\begin{align*}
    \hat{p}^{l}_F(s\to s')=&\hat{p}_F(s\to s')\\
    \hat{p}^{l}_B(s\to s')=&\begin{cases}
        \hat{p}_F(s\to s') &, l<l(s')\\
        \hat{p}_B(s\to s') &, l=l(s')
    \end{cases}
\end{align*}
then these virtual transitions have no contribution to the total loss, thus we can assign to them arbitrary weights.

Let $E^i$ be the collections of edges from layer $i$ to layer $i+1$, and let $w(E^i)=1$, then we have
\begin{align*}
    \mu(s\to s')=\hat{p}_F^{E^{l(s)}}(s\to s')=\hat{p}_F(s\to s')
\end{align*}

\subsection{Sub-Trajectory Balance Loss}

Assume that $G$ is a graded DAG with $L+1$ layers. Suppose $\tau=(s_0=s_o, s_1, \cdots, s_L=s_f)$ is a complete trajectory, we use $\tau_{i:j}=(s_i,s_{i+1}, \cdots, s_j)$ to denote a partial trajectory. Let $\mathcal{T}^{i:j}$ be the collections of trajectories from layer $i$ to layer $j$, then
\begin{align*}
    \mu(\iota)=&\sum_{\tau:\iota=\tau_{i:j}}\hat{P}_F(\tau)\frac{\lambda^{j-i}}{\sum_{0\leq i<j\leq L}\lambda^{j-i}}\\
    \approx&\frac{\lambda^{j-i}}{\sum_{0\leq i<j\leq L}\lambda^{j-i}}\hat{p}_F^{\mathcal{T}^{i:j}}(\iota)
\end{align*}
Hence $w(\mathcal{T}^{i:j})=\frac{\lambda^{j-i}}{\sum_{0\leq i<j\leq L}}$ and $0$ otherwise.

\section{Proof of Theorem \ref{zfza}}
\label{appendix:thm:zfza}
\begin{theorem}
    Let $\mathcal{L}$ be an objective function for training GFlowNets, whose regression loss $g$ corresponds to $D_f$ according to Theorem \ref{thm:main-theorem}. If $D_f$ is zero-forcing, then $\mathcal{L}$ and $g$ are both zero-forcing. If $D_f$ is zero-avoiding, then $\mathcal{L}$ and $g$ are both zero-avoiding.
\end{theorem}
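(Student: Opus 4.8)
The plan is to transfer the zero-forcing / zero-avoiding property of the divergence $D_f$ onto the induced terminating distribution $\hat P_T$ by routing everything through the correspondence of Theorem \ref{thm:main-theorem} and then invoking Proposition \ref{zero}. The clause about $g$ is for free: by the definition given just before the theorem, saying ``$g$ is zero-forcing/zero-avoiding'' \emph{means} that the objective $\mathcal{L}$ assembled from $g$ has that property, so it suffices to establish the claim for $\mathcal{L}$.

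For the main argument I would fix an arbitrary parameter space $\Theta$ and a minimizer $\theta^*\in\arg\min_\theta\mathcal{L}(\theta)$. By Theorem \ref{thm:main-theorem} the gradient of $\mathcal{L}$ at $\theta^*$ coincides with that of $F(\theta):=\sum_{C\in\mathcal{C}}w(C)D_f(\hat p^C_B\|\hat p^C_F)$, so $\theta^*$ is a stationary point of $F$; using the convexity of $D_f$ in its second argument (stated earlier), I would argue that $\theta^*$ in fact minimizes $F$ over the achievable flows. Next I single out the terminating cut $C_T=\{(s\to s_f)\}$, on which $\hat p^{C_T}_B(s)=R(s)$ is fixed while $\hat p^{C_T}_F(s)=\hat p^T_F(s)$ is proportional to $\hat P_T(s;\theta^*)$. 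Minimality of $F$ forces $\hat p^T_F$ to minimize $D_f(R\|\cdot)$ over its achievable set, so Proposition \ref{zero} applies verbatim with $p=R$ and $q=\hat p^T_F$: if $D_f$ is zero-forcing ($f(0)=\infty$) it yields $\hat P_T(s;\theta^*)=0$ whenever $R(s)=0$, and if $D_f$ is zero-avoiding ($f'(\infty)=\infty$) it yields $\hat P_T(s;\theta^*)>0$ whenever $R(s)>0$, which are exactly the two clauses of the definition.

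For objectives whose weight $w$ does not charge the terminating cut directly (e.g.\ TB, where $w$ is supported on the full-trajectory cut $\mathcal{T}$), I would run the same argument on the relevant cut and then push the conclusion down to $\hat P_T$ by marginalization: a trajectory $\iota$ has $\hat p^C_B(\iota)=0$ exactly when its terminal endpoint carries zero reward (the backward policy being positive on existing edges), and $\hat P_T(s)=\sum_{\iota\to s}\hat P_F(\iota)$ is a nonnegative combination of forward flows. Hence cut-level zero-forcing collapses to $\hat P_T(s)=0$ on zero-reward states, and cut-level zero-avoiding forces at least one trajectory into each positive-reward state, giving $\hat P_T(s)>0$.

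The hard part will be upgrading the gradient identity of Theorem \ref{thm:main-theorem} to a genuine statement about \emph{minimizers}: that identity holds for the gradient taken with the sampling/resampling weights $\mu$ detached, so $\mathcal{L}$ and $F$ need not agree as functions and I cannot simply say ``same objective, same minimizer.'' The clean route is to observe that any configuration placing positive $\hat p^T_F$ on a zero-reward $s$ (zero-forcing case) or zero $\hat p^T_F$ on a positive-reward $s$ (zero-avoiding case) makes the corresponding $f$-divergence term infinite by Proposition \ref{fdf}, whereas finite-loss configurations exist; combined with convexity of $D_f$, a stationary point of $F$ cannot sit at such an infinite-penalty configuration, so the minimizer of $\mathcal{L}$ must avoid it. A secondary technical point is ensuring the achievable set of $\hat P_T$ is rich enough that the conclusion of Proposition \ref{zero} is not vacuous, which holds under the standard assumption that the forward policy can place or withhold mass on each terminal state.
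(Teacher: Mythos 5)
Your proposal follows the same core route as the paper's proof: use the correspondence of Theorem \ref{thm:main-theorem} to regard the cut-level forward distributions at the minimizer as $f$-divergence minimizers, invoke the zero-forcing/zero-avoiding characterization of such minimizers (Proposition \ref{zero}), and transfer the conclusion to $\hat{P}_T$; the clause about $g$ is likewise handled by definition in both. The packaging differs: the paper's transfer step is a single trajectory-lifting argument (contrapositive for zero-forcing, direct for zero-avoiding) that works uniformly for any weighting $w$ --- if $\hat{P}_T(s;\theta^*)>0$, pick a complete trajectory $\tau$ through $s$ with $\hat{P}_F(\tau;\theta^*)>0$; every object of $\tau$ lying in a weighted cut then has positive forward flow, zero-forcing yields positive backward flow on each such object, and positivity of the backward flow at the terminal object forces $R(s)>0$. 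This avoids your case split between ``the terminating cut is charged'' and ``push down by marginalization,'' though your marginalization argument is equivalent in substance and needs the same side condition (backward policy positive on existing edges) that the paper uses implicitly.

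Where you genuinely depart from the paper is the ``hard part'' you flag, and here there is a problem with your patch. The paper never addresses the bridge from the gradient identity to a statement about minimizers --- it silently treats $\hat{p}^C_F(\cdot;\theta^*)$ as an element of $\mathop{\arg\min}_{q}D_f(\hat{p}^C_B||q)$ over the achievable set --- so you have correctly identified a gap in the paper's own argument. But your first repair does not work as stated: convexity of $D_f$ is convexity over the space of distributions, not over $\theta$, so a stationary point of $F(\theta)=\sum_{C}w(C)D_f(\hat{p}^C_B||\hat{p}^C_F)$ need not be a minimizer when the achievable set of flows is nonconvex (as with neural parameterizations); moreover, the definition quantifies over \emph{arbitrary} parameter spaces $\Theta$, for which a minimizer of $\mathcal{L}$ need not be a stationary point at all. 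Your fallback via infinite penalties (Proposition \ref{fdf}) is the right mechanism --- it is essentially how Proposition \ref{zero} works --- but it must be run on $F$, not on $\mathcal{L}$: in the zero-avoiding case the on-policy weight $\mu(o)=\hat{p}_F(o)\sum_{C\ni o}w(C)$ vanishes exactly at the offending configurations, so $\mathcal{L}$ stays finite there (for Linex$(1)$ the term tends to $w\,\hat{p}_B(o)$, not $\infty$) and only the detached gradient, i.e.\ the gradient of $F$, blows up. So the conclusion you want cannot be read off the values of $\mathcal{L}$; it requires identifying minimizers of $\mathcal{L}$ with minimizers of $F$, a step that neither your argument nor the paper's closes rigorously --- yours at least names it.
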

\begin{proof}
    Assume that $D_f$ is zero-forcing, and $\hat{P}_T(s;\theta^*)>0$ for some terminating state $s$. Then there exists a trajectory $\tau=(s_o,\cdots,s,s_f)$ such that $\hat{P}_F(\tau;\theta)>0$, thus \[\hat{p}_F^C(o)=\hat{p}_F(o)>0\] for any $o\in\tau, o\in C, w(C)>0$. Since $D_f$ is zero-forcing, $\hat{p}_B(o)=\hat{p}_B^C(o)>0$ for any $o\in\tau$, meaning that $\hat{P}_B(\tau)>0$ and  $R(s)>0$. Thus, $R(s)=0$ implies $\hat{P}_T(s;\theta)=0$ , so $\mathcal{L}$ is zero-forcing, and then $g$ is zero-forcing as well.

    Similarly, assume that $D_f$ is zero-avoiding, and $R(s)>0$ for some terminating state $s$. Then there exists a trajectory $\tau=(s_o,\cdots,s,s_f)$ such that $\hat{P}_B(\tau)>0$, thus \[\hat{p}_B^C(o)=\hat{p}_B(o)>0\] for any $o\in\tau, o\in C, w(C)>0$. Since $D_f$ is zero-avoiding, $\hat{p}_F(o)=\hat{p}_F^C(o)>0$ for any $o\in\tau$, meaning that $\hat{P}_F(\tau;\theta)>0$, so $\hat{P}_T(s;\theta)>0$. Thus, $R(s)>0$ implies that $\hat{P}_T(s;\theta)>0$, so $\mathcal{L}$ is zero-avoiding, and then $g$ is zero-avoiding as well.
\end{proof}

\section{Experimental Details}
\label{appendix:exp}

\subsection{Hyper-grid}
\label{appendix:hypergrid}

Our implementation of the baselines is based on \citet{tiapkin2024generative}. All models are parameterized by an MLP with 2 hidden layers of 256 neurons. We train the model with Adam optimizer using a batch size of 16 and a learning rate of 0.001. For the TB case, we use a larger learning rate of 0.1 for learnable total flow $\hat{Z}$. For STB parameter $\lambda$, we use the value of $0.9$ following \citet{tiapkin2024generative} and \citet{madan2023learning}. We repeat each experiment 3 times using different random seeds. In each run, we train the models until 800k trajectories have been collected, and the empirical sample distribution is computed over the last 80k seen trajectories.

\subsection{Bit-sequence Generation}
\label{appendix:bitseq}

In this experiment, our implementation of the baselines is based on \citet{tiapkin2024generative} and \citet{pan2023better}. The model is a 3-layer Transformer with 64 hidden units and 8 attention heads per layer. We train the model with Adam optimizer using a batch size of 16 and a learning rate of 0.001. For the TB case, we use a larger learning rate of 0.002 for learnable total flow $\hat{Z}$. For STB parameter $\lambda$, we use the value of $1.5$. Following \citet{tiapkin2024generative}, we use a reward exponent of $2$. To calculate the Spearman Correlation, we use the same Monte-Carlo estimation for $P_T$ as \citet{zhang2022generative} and \citet{tiapkin2024generative}, namely
\begin{align*}
    P_T(x)\approx\frac{1}{N}\sum_{i=1}^N\frac{P_F(\tau^i)}{P_B(\tau^i|x)}
\end{align*}
with $N=10$. We repeat each experiment 5 times using different random seeds. 

\subsection{Molecule Generation}
\label{appendix:molecule}

In the molecule generation experiment, our implementation of the baselines is based on \citet{tiapkin2024generative}. We use Message Passing Neural Networks (MPNN) as the model architecture. We train the model with Adam optimizer using a batch size of 4 and a learning rate of 0.0005. We use a reward exponent of $4$, and the STB parameter $\lambda$ is set to $0.99$. We repeat each experiment 4 times using different random seeds.  In each run, We train the models for 50000 steps, generating 200k molecules.

\section{More Divergence-based Losses}
\label{appendix:losses}

Apart from the four representative divergence-based losses in Section \ref{sec:loss_design}, we also derive another five novel losses from some well-known divergence measures, including the forward and backward $\chi^2$ distance, total variation distance, symmetric KL divergence and Jensen-Shannon divergence (See Table \ref{morelosses}).

\begin{table}[htb]
\renewcommand{\arraystretch}{1.35}
\centering
\caption{Five well-known $f$-divergences and their corresponding regression losses.} 
\resizebox{\linewidth}{!}{
\begin{tabular}{lllcc}
\toprule
\textbf{Divergence}& $\bm{f(t)}$ & $\bm{g(t)}$ & \textbf{Zero-forcing} & \textbf{Zero-avoiding} \\ \midrule
Forward $\chi^2$ & $\frac{1}{2}(t-1)^2$ & $\frac{1}{4}\left(e^{2t}-2t-1\right)$ & & \checkmark \\
Reverse $\chi^2$ & $\frac{1}{2}\left(t+\frac{1}{t}-2\right)$ & $e^{-t}+t-1$ & \checkmark & \\
Total Variation & $\frac{1}{2}|t-1|$ & $\frac{1}{2}|t|$ &  &  \\
Symmetric KL & $\frac{1}{2}(t-1)\log t$ & $\frac{1}{2}\left(e^t+\frac{1}{2}t^2-t-1\right)$ & \checkmark & \checkmark \\
JS Divergence & $\frac{1}{2}\left(t\log t-(t+1)\log\left(\frac{t+1}{2}\right)\right)$ & $\frac{1}{2}\int_0^t\log\left(\frac{1+e^x}{2}\right)dx$ &  &  \\ \bottomrule
\end{tabular}
}
\label{morelosses}
\vspace{-6pt}
\end{table}

\paragraph{$\chi^2$ distance}. The $\chi^2$ distance between $p$ and $q$ is defined as
\begin{align}
    \chi^2(p||q)=\frac{1}{2}\sum_{x\in\mathcal{X}}\frac{(p(x)-q(x))^2}{q(x)}
    \label{chi-square}
\end{align}
It is a special case of $\alpha$-divergence with $\alpha=2$, or $f$-divergence with $f(t)=\frac{1}{2}(t-1)^2$. According to \textbf{Theorem} \ref{thm:main-theorem}, we obtain the corresponding regression loss Linex(2): $g(t)=\frac{1}{4}\left(e^{2t}-2t-1\right)$. 

By exchanging $p$ and $q$ in (\ref{chi-square}), we obtain the reverse $\chi^2$ distance, which is a special case of $\alpha$-divergence with $\alpha=-1$, or $f$-divergence with $f(t)=\frac{1}{2}\left(t+\frac{1}{t}-2\right)$. The corresponding regression loss Linex(-1): $g(t)=e^{-t}+t-1$.

The forward $\chi^2$ distance and Linex(2) are zero-avoiding, while the reverse $\chi^2$ and Linex(-1) distance are zero-forcing.

\paragraph{Total Variation.} The total variation between $p$ and $q$ is defined as
\begin{align*}
    TV(p||q)=\frac{1}{2}\sum_{x\in\mathcal{X}}\left|p(x)-q(x)\right|
    \label{chi-square}
\end{align*}
It corresponds to the $f$-divergence with $f(t)=\frac{1}{2}|t-1|$, and the regression loss $g(t)=\frac{1}{2}|t|$. Since $f(0)=f'(\infty)=\frac{1}{2}$, this loss function is neither zero-forcing nor zero-avoiding.

\paragraph{Symmetric KL Divergence.} The symmetric KL divergence between $p$ and $q$ is defined as
\begin{align*}
    D_{sKL}(p||q)=&\frac{1}{2}\left(D_{KL}(p||q)+D_{KL}(q||p)\right)\\
    =&\frac{1}{2}\sum_{x\in\mathcal{X}}\left(p(x)\log\frac{p(x)}{q(x)}+q(x)\log\frac{q(x)}{p(x)}\right)
\end{align*}
It corresponds to the $f$-divergence with $f(t)=\frac{1}{2}(t-1)\log t$, and the regression loss $g(t)=\frac{1}{2}\left(e^t+\frac{1}{2}t^2-t-1\right)$. Since $f(0)=f'(\infty)=\infty$, this loss function is both zero-forcing and zero-avoiding.

\paragraph{Jensen-Shannon Divergence.} The Jensen-Shannon divergence (JS divergence) between $p$ and $q$ is defined as
\begin{align*}
    D_{JS}(p||q)=&\frac{1}{2}\left(D_{KL}(p||\frac{p+q}{2})+D_{KL}(q||\frac{p+q}{2})\right)\\
    =&\frac{1}{2}\sum_{x\in\mathcal{X}}\left(p(x)\log\frac{2p(x)}{p(x)+q(x)}+q(x)\log\frac{2q(x)}{p(x)+q(x)}\right)
\end{align*}
It corresponds to the $f$-divergence with $f(t)=\frac{1}{2}\left(t\log t-(t+1)\log\left(\frac{t+1}{2}\right)\right)$, and the regression loss $g(t)=\frac{1}{2}\int_0^t\log\left(\frac{1+e^x}{2}\right)dx$. Since $f(0)=f'(\infty)=\frac{1}{2}\log 2$, this loss function is neither zero-forcing nor zero-avoiding.

We evaluate their performance on bit-sequence generation task using the same metrics (Please refer to Section \ref{subsec:bitseq} and Appendix \ref{appendix:bitseq} for details). It turns out that losses with the same zero-forcing or zero-avoiding properties lead to similar behaviors.

\begin{table}[htb]
\small
    \centering
    \caption{The number of runs that find all modes within 250k steps, and the median of the steps before they find all modes.}
    \begin{tabular}{lccc}
    \toprule
         &  TB &  DB &  STB \\ \midrule
    Reverse KL (baseline) & 1/5, \ \ \ --\ \ \ \ &  \underline{5/5}, 13.4k &  4/5, 50.6k \\
    Reverse $\chi^2$  & 0/5, \ \ \ --\ \ \ \ &  0/5, \ \ \ --\ \ \ \ & 0/5, \ \ \ --\ \ \ \ \\ \midrule
    Forward KL & \underline{5/5}, 98.0k &  \underline{5/5}, 10.8k &  \underline{5/5}, 20.3k \\
    Forward $\chi^2$ & \underline{5/5}, \textbf{80.3k} &  \underline{5/5}, \textbf{8.1k} &  \underline{5/5}, \textbf{10.2k} \\ \midrule
    Hellinger & \underline{5/5}, 111.2k & \underline{5/5}, 11.7k &  \underline{5/5}, 55.9k \\
    Total Variation & 1/5, \ \ \ --\ \ \ \ & \underline{5/5}, 47.1k &  2/5, \ \ \ --\ \ \ \ \\
    Jensen-Shannon & 4/5, 162.2k & \underline{5/5}, 12.8k &  3/5, 165.2k \\ \midrule
    Shifted-Cosh & 4/5, 92.2k & 0/5, \ \ \ --\ \ \ \ & \underline{5/5}, 90.0k \\
    Symmetric KL & 4/5, 122.2k & \underline{5/5}, 13.7k &  \underline{5/5}, 27.5k \\
    \bottomrule
    \end{tabular}
    \label{bitseq_allmode_appendix}
\end{table} 

\begin{table}[htb]
\small
\centering
\caption{The Spearman correlation between $P_T$ and $P_R$ over a test set (the higher the better). The failed runs where modal collapse happened are eliminated.}
\begin{tabular}{lccc}
    \toprule
         &  TB &  DB &  STB \\ \midrule
    Reverse KL (baseline) & $\underline{0.8081}{(\pm0.0159)}$ & ${0.7907}{(\pm0.0175)}$ & $\underline{0.8088}{(\pm0.0169)}$ \\
    Reverse & $\underline{0.8074}{(\pm0.0129)}$ & \ \ \ --\ \ \ \  & $\underline{0.7899}{(\pm0.0166)}$ \\ \midrule
    Forward KL & ${0.7421}{(\pm0.0216)}$ & ${0.7464}{(\pm0.0107)}$ &  ${0.7517}{(\pm0.0246)}$ \\
    Forward $\chi^2$ & ${0.7507}{(\pm0.0174)}$ & ${0.7266}{(\pm0.0178)}$ &  ${0.7439}{(\pm0.0126)}$ \\ \midrule
    Hellinger & ${0.7454}{(\pm0.0021)}$ & ${0.7580}{(\pm0.0132)}$ &  ${0.7711}{(\pm0.0190)}$ \\
    Total Variation & $\underline{0.7893}{(\pm0.0144)}$ & ${0.7266}{(\pm0.0178)}$ & \ \ \ --\ \ \ \   \\
    Jensen-Shannon & $\underline{0.7852}{(\pm0.0256)}$ & ${0.7542}{(\pm0.0046)}$ &  ${0.7640}{(\pm0.0213)}$ \\ \midrule
    Shifted-Cosh & $\mathbf{0.8122}{(\pm0.0145)}$ & $\mathbf{0.8213}{(\pm0.0094)}$ & $\mathbf{0.8132}{(\pm0.0149)}$ \\
    Symmetric KL & $\underline{0.7908}{(\pm0.0235)}$ & ${0.7630}{(\pm0.0097)}$ & $\underline{0.7886}{(\pm0.0227)}$ \\
    \bottomrule
    \end{tabular}
\label{bitseq_cor_appendix}
\end{table}

\end{document}